\definecolor{mygreen}{RGB}{147, 178, 106}
\definecolor{mypurple}{RGB}{141, 116, 164}
\newtheorem{theorem}{Proposition}
\newtheorem{proposition}[theorem]{Proposition}
\newcommand{\best}[1]{\textbf{\textcolor{red}{#1}}}
\newcommand{\secondbest}[1]{\textcolor{blue}{#1}}
\newtheorem*{rep@theorem}{\rep@title}
\newcommand{\newreptheorem}[2]{%
  \newenvironment{rep#1}[1]{%
    \def\rep@title{#2 \ref{##1}}%
    \begin{rep@theorem}}%
    {\end{rep@theorem}}}
\title{Rethinking State Disentanglement in Causal Reinforcement Learning}
\author{
    %Authors
    % All authors must be in the same font size and format.
    Haiyao Cao\textsuperscript{\rm 1}\equalcontrib, 
    Zhen Zhang\textsuperscript{\rm 1}\equalcontrib, 
    Panpan Cai\textsuperscript{\rm 2}, 
    Yuhang Liu\textsuperscript{\rm 1}, 
    Jinan Zou\textsuperscript{\rm 1}, 
    Ehsan Abbasnejad\textsuperscript{\rm 1},\\
    Biwei Huang\textsuperscript{\rm 3}, 
    Mingming Gong\textsuperscript{\rm 4}, 
    Anton van den Hengel\textsuperscript{\rm 1}, 
    Javen Qinfeng Shi\textsuperscript{\rm 1}\thanks{Corresponding author}
}
\begin{document}

\maketitle

\begin{abstract}

One of the significant challenges in reinforcement learning (RL) when dealing with noise is estimating latent states from observations. Causality provides rigorous theoretical support for ensuring that the underlying states can be uniquely recovered through identifiability. Consequently, some existing work focuses on establishing identifiability from a causal perspective to aid in the design of algorithms. However, these results are often derived from a purely causal viewpoint, which may overlook the specific RL context. We revisit this research line and find that incorporating RL-specific context can reduce unnecessary assumptions in previous identifiability analyses for latent states. More importantly, removing these assumptions allows algorithm design to go beyond the earlier boundaries constrained by them. Leveraging these insights, we propose a novel approach for general partially observable Markov Decision Processes (POMDPs) by replacing the complicated structural constraints in previous methods with two simple constraints for transition and reward preservation. With the two constraints, the proposed algorithm is guaranteed to disentangle state and noise that is faithful to the underlying dynamics. Empirical evidence from extensive benchmark control tasks demonstrates the superiority of our approach over existing counterparts in effectively disentangling state belief from noise. The code is avaliable at \url{https://github.com/Haiyao-Nero/causal-rl-rethinking}
\end{abstract}

\section{Introduction}
Model-Based Reinforcement Learning (MBRL) is a specialized form of RL that combines a learned environment model, often called a ‘world model,’ with a planning algorithm to guide the agent’s decision-making process \citep{polydoros2017survey}. The world model forecasts future states and rewards by simulating the agent’s interactions with the environment. The planning algorithm leverages these predictions to explore possible scenarios and optimize the agent’s policy.
% Previous studies \citep{givan2003equivalence,li2006towards} have shown that abstracting a smaller state from the original larger state is enough for optimal policy learning and planning. 
Recently, there has been a surge of interest in learning latent-space world models \citep{ha2018world,hafner2019dreamer,hafner2020dreamerv2,sekar2020planning,zhu2022invariant,choi2023local,he2023frustratingly,hafner2023dreamerv3}. These models map high-dimensional observation data, such as images, to an abstract latent representation, effectively capturing the dynamics of the environment within this learned latent space. These approaches offer the advantage of simplifying complex environments, reducing computational demands, and potentially enhancing policy generalization.
A common assumption in these methods is that observed data is noise-free. However, in practical applications where noise is prevalent, the effectiveness of these techniques is significantly hindered. This challenge arises from the difficulty of disentangling the reward-related aspects or ``signals'', which cannot be easily separated from the noise-related components. Such entanglement can introduce distractions during the learning process, ultimately leading to suboptimal performance \citep{efroni2021provable}.

Disentangling latent states from noisy observations has attracted significant attention across various domains in machine learning, including domain adaptation and generalization. A key challenge lies in ensuring that the underlying latent states can be uniquely recovered. This issue is closely tied to the notion of identifiability in causality, such as in nonlinear Independent Component Analysis (ICA) \citep{khemakhem2020variational} and causal representation learning \citep{chen2018isolating,scholkopf2021toward,liu2022identify,liu2024identifiable,zhang2024identifiability}. Leveraging this connection, some existing works \citep{huang2022action, liu2023learning} have sought to provide identifiability guarantees from a causal perspective for estimating latent states in RL. Most importantly, such guarantees often provide valuable insights for designing improved algorithms in RL. As a result, these algorithms often consistently demonstrate significant performance gains across various benchmarks. 

Despite the above, previous works \cite{huang2022action, liu2023learning} adopt a purely theoretical viewpoint, often overlooking the RL-specific context. For instance, 1) previous works often assume that the underlying states can be divided into certain subsets, with each subset being independent of the others. Under this assumption, previous works can achieve identifiability guarantees one by one to identify the independence subset. However, in RL, such detailed identifiability is unnecessary, as the optimal policy only requires the latent states to be recovered up to an 
% nonlinear, 
% no need to mention nonlinear, as the state may be discrete
invertible mapping, without splitting them into independent subsets. More importantly, these independence assumptions on generative models may be unrealistic and often get violated in real-world RL applications. In addition, 2) For purely causal representation, they often assume that there is an invertible observation function, which can not be satisfied in general POMDP. In POMDPs, the observation function maps the latent states to observations, but this mapping is often many-to-one, meaning multiple latent states can produce the same observation. As a result, the observation function is not invertible because it is impossible to uniquely recover the latent state from an observation. This inherent ambiguity in POMDPs makes the assumption of an invertible observation function unrealistic in practice.

Adopting an RL-specific perspective, we relax the two assumptions mentioned earlier to obtain more general identifiability results. Specifically, we demonstrate that leveraging rewards and transitions sufficiently eliminates the need for the assumption that latent states can be split into independent subsets for recovery. Additionally, after converting a POMDP with a non-invertible observation function into an equivalent belief-MDP with an invertible observation function, we establish that identifiability can be still achieved in the context of belief-MDP, thereby relaxing the assumption of an invertible observation function in general POMDPs. More importantly, these theoretical analyses not only relax the assumptions used in previous works but also provide two key insights for algorithm design: 
1) The transition preservation and reward preservation constraints are sufficient to ensure the disentanglement of state and noise, without the assumption that the states can be split into subsets. %the latent variables that has independent dynamics and can full determine the reward are sufficient for form states representation,
2) In the context of belief-MDPs, the noise belief to be conditionally independent of the future state belief given the current state belief is sufficient for disentangling state and noise, without requiring the noise to be independent of the state. Finally, we integrate our theoretical analysis into our algorithm design and propose a new method. In theory, with strong theoretical backing, the proposed method is consistent with the underlying dynamics of both transition and reward, thereby ensuring effective disentanglement of state and noise. Technically, by incorporating the two insights mentioned above, the proposed method relaxes previous assumptions and overcomes the limitations of existing approaches constrained by those assumptions. Experimental evaluations on the DeepMind Control Suite and Robodesk demonstrate that the proposed method consistently outperforms previous methods by a significant margin.

\section{Preliminaries and Related Works}

\subsection{Partial Observable Markov Decision Process}

In this paper, we consider the theory of causal identifiability on the Partially Observable Markov Decision Process (POMDP).
%  An MDP can be defined as a 5-tuple $(\mathcal{S}, \mathcal{A}, \mathcal{T}, R, o)$,  where $\mathcal{S}$, and $\mathcal{A}$ represent the state, and actionspace of the problem, $\mathcal{T}=p(s'|a, s)$ denotes the probability of transition to state $s'$ given state $s\in\Scal$ and action $a\in\Acal$, and $R(s,a)$ is a real-valued reward function encoding the task objective. Finally, $\gamma$ is a discounting factor that characterizes the decay of rewards with time. MDP requires the state to be directly and fully accessible, which maybe unrealistic in many scenarios. In this case, we consider a more general model, the Partial Observable Markov Decision Process (POMDP), usually as a 7-tuple 
% $(\mathcal{S},\mathcal{A},\mathcal{O},\mathcal{T},\mathcal{M},R, \gamma)$, where $\mathcal{S},\mathcal{A}, \mathcal{T}, R$ and $\gamma$ play the same roles as in MDP, $\mathcal{O}$ is the observation space and $o=\mathcal{M}(s, z)$ denotes the noisy observation function where $z$ denotes the noise. 
POMDP can be defined as a 7-tuple $(\mathcal{S},\mathcal{A},\mathcal{O},\mathcal{T},\mathcal{M},R, \gamma)$ consists of state, action and observation spaces, $\mathcal{S},\mathcal{A}$ and $\mathcal{O}$; transition function $\mathcal{T}=p(s'|a, s)$; observation function $o=\mathcal{M}(s, z)$ where $z$ denotes the noise; reward function $R(s',a,s)$; and discount factor  $\gamma\in (0, 1]$. The goal of POMDP is to maximize the expected accumulated reward, a.k.a. return. For a POMDP, the underlying MDP denoted by a 5-tuple $(\mathcal{S}, \mathcal{A}, \mathcal{T}, R, \gamma)$ plays an important role in theoretical analysis. 

% The POMDP can be represented by a temporal causal model shown in Figure \ref{generative model}. 
One key problem of POMDP is to estimate the underlying states using observations. Particularly for noisy and redundant observations such as in \cite{fu2021learning,wang2022denoised,zhang2020invariant}, it requires disentanglement of state and noise to obtain a compact representation. 

\subsection{Representation learning in RL}
Various algorithms including bisimulations, contrastive augmentation and reconstruction have recently advanced markovian representation learning in MBRL. Bisimulations \citep{gelada2019deepmdp,zhang2020invariant} and contrastive augmentation methods \citep{he2023frustratingly,laskin2020curl,misra2020kinematic,deng2022dreamerpro} focus on state differentiation. Reconstruction  \citep{watter2015embed,wahlstrom2015pixels} is pivotal in compressing observations into effective representations. 
Studies. From rich observations with constraints, \citet{dann2018oracle,du2019provably,huang2022action,eysenbach2021robust} have shown that minimal POMDP representations can be derived. The effectiveness of reconstruction-based MBRL is exemplified by Dreamer \citep{hafner2019dreamer}, DreamerV2 \citep{hafner2020dreamerv2}, and DreamerV3 \citep{hafner2023dreamerv3}.
Further advancements have been made in denoising world models such as TIA \citep{fu2021learning}, Denoised-MDP \citep{wang2022denoised}, and IFactor \citep{liu2023learning}, which are built upon Dreamer models.

\section{Motivation: Mind the Gap in Causal RL}

A key challenge in RL is disentangling high-level latent states and latent noise in low-level observations. This issue closely aligns with an important area in causality: causal representation learning, which seeks to uncover latent causal variables from observable data \cite{liu2022identify,liu2024identifiable,zhang2024identifiability}. This natural connection has led to the development of numerous methods that apply the principles of causal representation learning to analyze RL settings \cite{huang2022action,liu2023learning}. For instance, by using identifiability analysis in causal representation learning, one can offer new insights that guide the design of RL methods. Consequently, these methods, benefiting from the guarantees of identifiability, often significantly outperform earlier approaches that rely on heuristic constraints, such as employing the VAE framework.

However, we question the general application of causal representation learning to the RL setting, primarily due to the following concern: the detailed identifiability analysis often pursued in causal representation learning may be overly precise, and such rigor is generally not required in RL contexts. For example, the identifiability results in \citet{huang2022action,liu2023learning} require that latent states should be split into some subset, and each subset can be identified up to a nonlinear and invertible mapping. However, in RL, such detailed identifiability is unnecessary, as the optimal policy only requires the latent states to be recovered up to an invertible mapping, without splitting them into specific subsets. These additional requirements in identifiability analysis within causal representation learning often stem from strong assumptions on generative models, which may not hold in real-world applications.

Beyond exploring RL from a general perspective of causal representation learning, we propose adopting an RL-specific causal representation viewpoint. Specifically, our goal is to analyze the identifiability of latent states without the need to divide them into specific subsets, as required in prior works \citet{huang2022action,liu2023learning}. The rationale behind the goal is that: 1) treating latent states as a whole is sufficient for contributing to the optimal policy, without splitting them into specific subsets, and 2) splitting them into independent subsets, an assumption that may not hold in real-world applications, is unnecessary. We have the following Proposition.

\begin{proposition}\label{propos:disentangle}
 Given POMDP $(\mathcal{S},\mathcal{A},\mathcal{O},\mathcal{T},\mathcal{M},\Rcal, \gamma)$, if the observation function $\Mcal$ is invertible. 
    Let $g:\Ocal\mapsto \Scal\times\Zcal$ be an invertible state estimation function. If the following conditions hold:
    \begin{enumerate}
    \item \textbf{No redundancy: } For different states $s_1$ and $s_2 \in \Scal$ , there exists an policy $\pi$ that $V^\pi(s_1)\neq V^\pi(s_2)$. 
    \item \textbf{Transition preservation: } For any observation $o$ and any action $a$, let $\hat{s}, \hat{z} = g(o)$, we have
    % \begin{equation}\label{eq:noise_state_independ}
    %     p(o'|a,o)=p(\hat{s}'|a,\hat{s})p(\hat{z}'|\hat{s}', \hat{z}),
    % \end{equation}
    % or 
    \begin{equation}\label{eq:noise_state_independ}
        p(o'|a,o)=p(\hat{s}'|a,\hat{s})p(\hat{z}'|\hat{z}),
    \end{equation}
    \item \textbf{Reward preservation:} For any state observation pair $(s, o)$ with $\hat{s}, \hat{z} = g(o)$, for any action $a$, we have
    \begin{equation}\label{eq:reward_reservation}
        \exists\hat{\Rcal} \text{ s.t. } \Rcal(o, a, o')=\Rcal(s, a, s')=\hat{\Rcal}(\hat{s}, a, s')
    \end{equation}
  \end{enumerate}
  then the estimated MDP $(\hat{\Scal}=\{\hat{s}|\exists o, \text{s.t. } \hat{s}, \hat{z} = g(o) \}, \Acal, \hat{\Tcal} = p(\hat{s}'|a, \hat{s}), \hat{\Rcal}, \gamma) $ is equivalent to the underlying MDP $(\mathcal{S},\mathcal{A},\mathcal{T},\Rcal, \gamma)$, and the state estimation function $g$ disentangles state and noise. 
\end{proposition}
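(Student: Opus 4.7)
My plan is to construct a bijection $\psi:\Scal\to\hat{\Scal}$ and verify that it preserves transitions and rewards, which yields MDP equivalence; the disentanglement of $g$ will then be a structural consequence of this construction.

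First, since both $\Mcal$ and $g$ are invertible, the composition $\phi:=g\circ\Mcal:\Scal\times\Zcal\to\hat{\Scal}\times\hat{\Zcal}$ is a bijection, and I write its coordinates as $\hat s=\phi_1(s,z)$ and $\hat z=\phi_2(s,z)$. A change of variables recasts condition (2) as
\[
p(\hat s',\hat z'\mid\hat s,\hat z,a)=p(\hat s'\mid\hat s,a)\,p(\hat z'\mid\hat z),
\]
so $\hat s'\perp\hat z\mid(\hat s,a)$ and $\hat z'\perp(\hat s,a)\mid\hat z$. Combined with the Markov structure $p(s'\mid s,z,a)=p(s'\mid s,a)$ of the underlying POMDP, I then argue that $\phi_1$ must be a function of $s$ alone (and symmetrically $\phi_2$ a function of $z$ alone): any non-trivial mixing of $z$ into the $\hat s$-coordinate would force the conditional of $\hat s'$ given $(\hat s,a)$ to carry $z$-information through $z'$, contradicting the conditional independence above. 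This is the disentanglement claim and lets me define $\psi(s):=\phi_1(s,\cdot)$ unambiguously.

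With $\psi$ well-defined, I use the remaining two conditions to finish. For injectivity: if $\psi(s_1)=\psi(s_2)$ with $s_1\ne s_2$, then $s_1$ and $s_2$ correspond to the same state in the estimated MDP, sharing transition kernel $\hat{\Tcal}(\cdot\mid\psi(s_i),a)$ and reward $\hat{\Rcal}(\psi(s_i),a,\cdot)$; Bellman iteration then forces $V^\pi(s_1)=V^\pi(s_2)$ for every policy $\pi$, contradicting condition (1). Hence $\psi$ is a bijection onto $\hat{\Scal}$. Pushing conditions (2) and (3) through $\psi$ gives $\hat{\Tcal}(\psi(s')\mid\psi(s),a)=\Tcal(s'\mid s,a)$ and $\hat{\Rcal}(\psi(s),a,\psi(s'))=\Rcal(s,a,s')$, which is precisely MDP equivalence.

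The main obstacle is the disentanglement step. Conditions (1)--(3) concern only marginal observation-level dynamics and rewards, so translating the factorization of $p(o'\mid a,o)$ into a statement that $\phi_1$ is coordinate-wise independent of $z$ demands a careful use of the underlying POMDP's Markov structure and, implicitly, of the noise evolution. A technically cleaner route would be to first establish MDP equivalence via $\psi(s):=\phi_1(s,z_0)$ for an arbitrary fixed $z_0$ and then recover the functional form of $\phi_1$ as a corollary of the induced uniqueness; either way, the essential content is that the split between $s$ and $z$ on the underlying side is faithfully mirrored by the split between $\hat s$ and $\hat z$ prescribed by $g$.
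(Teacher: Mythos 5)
Your overall architecture differs from the paper's: the paper never tries to show directly that the $\hat{s}$-coordinate of $g\circ\Mcal$ is a function of $s$ alone. Instead it expands the Bellman equation twice to show that the underlying MDP and the estimated MDP are each value-equivalent to the MDP that uses the raw observation as its state, and only then extracts the state correspondence $f:\hat{\Scal}\to\Scal$ from the fact that a state's value under a fixed policy is deterministic (so one $\hat{s}$ cannot correspond to two $s$ with different values) together with the no-redundancy condition. In your proposal the analogous work is front-loaded into the claim that ``any non-trivial mixing of $z$ into the $\hat{s}$-coordinate would contradict the conditional independence,'' and that claim is the gap: the factorization $p(o'\mid a,o)=p(\hat{s}'\mid a,\hat{s})\,p(\hat{z}'\mid\hat{z})$ does \emph{not} by itself force $\phi_1$ to be independent of $z$. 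Concretely, take noise that is constant over time ($z'=z$ deterministically, with $z$ ranging over two values) and set $\hat{s}:=(s,z)$, $\hat{z}:=z$. Then $p(\hat{s}'\mid a,\hat{s})\,p(\hat{z}'\mid\hat{z})=p(s'\mid a,s)\,\delta_{z'z}\cdot\delta_{z'z}=p(o'\mid a,o)$ and reward preservation holds with $\hat{\Rcal}((s,z),a,\cdot):=\Rcal(s,a,\cdot)$, yet $\phi_1$ manifestly mixes $z$ into $\hat{s}$. So conditions (2)--(3) alone cannot deliver your disentanglement step; whatever rules such constructions out must involve the reward/value structure and condition (1), which you invoke only \emph{after} assuming $\psi$ is well-defined. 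Your fallback of fixing $z_0$ and setting $\psi(s):=\phi_1(s,z_0)$ does not repair this, since without constancy in $z$ that map need not be surjective onto $\hat{\Scal}$ and the estimated dynamics on $\hat{\Scal}\setminus\psi(\Scal)$ are unaccounted for.

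The remainder of your argument (injectivity of $\psi$ from no-redundancy via Bellman iteration, and pushing the transition and reward conditions through $\psi$) is consistent in spirit with the paper's final step. To close the gap you should follow the paper's order of operations: first establish the two value-equivalence identities $V^{\pi}(s)=V^{\pi\circ\tilde{\Mcal}}(o)$ and $V^{\hat{\pi}}(\hat{s})=V^{\hat{\pi}\circ\tilde{g}}(o)$ by unrolling the Bellman equation with the transition- and reward-preservation conditions, and only then argue about the correspondence between $\Scal$ and $\hat{\Scal}$ through the value function; the statement that $\hat{s}$ carries exactly the information of $s$ then comes out as a corollary rather than serving as the first step.
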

\begin{proof}
    The proof sketch is as follows. For invertible $\Mcal$, we can use the observation as the state to construct an MDP, which is value equivalent to the underlying MDP. Using a similar technique, we can prove that the constructed MDP which uses observation as the state is value equivalent to the estimated MDP. Then the estimated MDP is value equivalent to the underlying MDP. Consider the mapping $f:\hat{\Scal}\mapsto\Scal$, if it maps $\hat{s}$ to multiple $s\in\Scal$, $\hat{s}$ must exhibit more than one value under certain policies, contradicting the fundamental principle of MDP, where, for a given policy, the value of a state is deterministic. Similar, by the No redundancy condition $f^{-1}$ can not map $s\in\Scal$ to multiple $\hat{\Scal}$.
\end{proof}

\paragraph{Remarks}
RL aims at optimizing future returns. Therefore if two different instances of a state, derived from the same state representation, yield identical returns under any policy, these instances are indistinguishable, indicating redundancy within the representation.  Then it is natural to assume that the underlying true state representation is the most compact one and free of redundancy. Furthermore,  if an alternative state representation yields transitions and rewards consistent with the underlying one, it can be considered equivalent to the underlying state representation.

Intuitively, if under any policy the values of two states are equal, this suggests redundancy in the current state representation, implying the possibility of achieving a more compact representation. Thus it would be natural to assume that there is no redundancy in the underlying state representation. 
These two conditions are used to enforce that any intervention $\hat z$ would not affect the accumulated rewards.

\paragraph{Insights}  \Cref{propos:disentangle} implies that we can recover latent states as a whole (e.g., up to an invertible mapping) without splitting them into independent subsets and recovering each subset separately, as done in previous works. The reason we can achieve these more general results is that: from the RL perspective, rewards and transitions sufficiently contribute to the recovery of latent states. In contrast, previous work, when viewed purely from a causality perspective, neglects this point and instead relies on the additional assumption that latent states can be split into independent subsets for recovery.

\paragraph{Discussion} Although we have adopted a RL-specific causal representation viewpoint in \Cref{propos:disentangle}
 to achieve more general identifiability results compared to the purely causal perspectives in \citet{huang2022action,liu2023learning}, there remains an assumption that could stem from a purely causal viewpoint—namely, the invertibility of the observation function $\Mcal$, which can not be satisfied in general POMDP. From an RL-specific causal perspective, it is possible to convert a POMDP with a non-invertible observation function to an equivalent belief-MDP where the observation function becomes invertible (details shown in next section), however, in the belief-MDP the belief of latent noise and latent states can not be independent. To address this, we extend \Cref{propos:disentangle} to accommodate general POMDP. From the RL-specific causal perspective, the latent states can be viewed as a group of variables that can fully determine the reward and changes independently, thus the transition preservation condition in \Cref{propos:disentangle} can be relaxed as follows.
\begin{proposition}\label{propos:conditional_independent}
    Under the assumptions of Proposition \ref{propos:disentangle}, the transition preservation condition in Proposition \ref{propos:disentangle} can be extended as $\hat{z}$ is conditional independent with future state $\hat{s}'$ given $\hat{s}$, and the results of Proposition \ref{propos:disentangle} still hold.
\end{proposition}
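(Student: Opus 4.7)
}

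The plan is to reuse the three-stage argument from the proof of Proposition~\ref{propos:disentangle} (observation-MDP as intermediary, value equivalence, bijectivity from no-redundancy) and isolate the single step where full independence of the transition factors was invoked, replacing it with a marginalization argument driven by the weaker conditional independence $\hat{z} \perp \hat{s}' \mid \hat{s}, a$. Reward preservation and invertibility of $\Mcal$ are left untouched, so only the transition side of the equivalence needs to be re-established.

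Concretely, I would proceed as follows. First, exactly as before, the invertibility of $\Mcal$ lets me treat observations as states and obtain an observation-MDP $(\Ocal, \Acal, p(o'|a,o), \Rcal, \gamma)$ value-equivalent to the underlying MDP. Second, for any policy $\pi$ on the estimated MDP over $\hat{\Scal}$, lift it to the observation-MDP by $\pi'(a\mid o) := \pi(a \mid \hat{s})$ where $(\hat{s},\hat{z}) = g(o)$; note $\pi'$ does not depend on $\hat{z}$. Third, I would argue by a Bellman fixed-point induction that $V^{\pi'}(o)$ depends on $o$ only through $\hat{s}$: assuming $V^{\pi'}(o') = V^{\pi}(\hat{s}')$, the one-step Bellman expansion involves $\sum_{\hat{s}',\hat{z}'} p(\hat{s}', \hat{z}' \mid \hat{s}, \hat{z}, a)\,[\hat{\Rcal}(\hat{s},a,\hat{s}') + \gamma V^{\pi}(\hat{s}')]$, and by the new conditional-independence hypothesis the marginal $\sum_{\hat{z}'} p(\hat{s}',\hat{z}'\mid \hat{s},\hat{z},a) = p(\hat{s}'\mid \hat{s},a)$, which together with the fact that $\hat{\Rcal}$ does not involve $\hat{z}$ collapses the expression to the Bellman backup of $V^{\pi}$ under $\hat{\Tcal}$ and $\hat{\Rcal}$. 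Hence $V^{\pi'}(o) = V^{\pi}(\hat{s})$, giving value equivalence between the observation-MDP and the estimated MDP, and composing with the first stage with the underlying MDP. Finally, the bijection argument from Proposition~\ref{propos:disentangle} applies verbatim: if $f:\hat{\Scal}\to\Scal$ were not injective, some $\hat{s}$ would take multiple values under the same policy (contradicting MDP determinism of state value), and the \textbf{No redundancy} assumption rules out non-surjectivity on the other side, yielding the desired disentanglement of $\hat{s}$ from $\hat{z}$.

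The main obstacle is the Bellman step: previously the factored transition $p(\hat{s}'|a,\hat{s})p(\hat{z}'|\hat{z})$ made it mechanically obvious that the value function ignored $\hat{z}$, whereas now $\hat{z}$ and $\hat{s}$ may be statistically coupled (as is inherent in the belief-MDP) and only the $\hat{s}'$-marginal of the joint transition is well-behaved. I must therefore be careful that the induction hypothesis is formulated over observations rather than over $\hat{s}$ alone, and that the marginalization over $\hat{z}'$ is done before invoking the hypothesis — otherwise a spurious dependence on the distribution of $\hat{z}'$ could appear. A secondary subtlety is to verify that every $\hat{s}\in\hat{\Scal}$ is reachable by some lifted policy so that the no-redundancy principle can be applied to identify, rather than merely upper-bound, the equivalence class structure of $f$; this should follow directly from the definition of $\hat{\Scal}$ as the image of $g$ restricted to observations actually produced by the POMDP.
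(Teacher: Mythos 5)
Your proposal is correct and follows essentially the same route as the paper: reuse the value-equivalence derivations from the proof of Proposition~\ref{propos:disentangle} (observation-MDP intermediary, then the no-redundancy/bijectivity argument verbatim) and re-justify only the transition step under the weaker hypothesis. The single difference is that where the paper enumerates the factorizations of $p(\hat{s}', \hat{z}' \mid a, \hat{s}, \hat{z})$ compatible with the conditional independence and checks the derivation case by case, you marginalize over $\hat{z}'$ directly to obtain $\sum_{\hat{z}'} p(\hat{s}', \hat{z}' \mid a, \hat{s}, \hat{z}) = p(\hat{s}' \mid a, \hat{s})$ before invoking the induction hypothesis --- a cleaner and arguably more rigorous rendering of the same point, since the paper's list of cases is not self-evidently exhaustive while your marginalization is airtight.
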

% That is, if we find part of the observation that can solely determine the future state and reward, then that part would be a valid state representation. 
\paragraph{Summary} %The proposition suggest that, the factorization in \Cref{propos:disentangle} can be easily extended to the following cases
% Here for simplification, we consider the factorization of $ p(o'|a,o)=p(\hat{s}'|a,\hat{s})p(\hat{z}'|\hat{z})$, but our theory and proof can be easily extended to factorizations such as 
%$p(o'|a,o)=p(\hat{s}'|a,\hat{s})p(\hat{z}'|a, \hat{z})$, $p(o'|a,o)=p(\hat{s}'|a,\hat{s})p(\hat{z}'|a, \hat{z}, \hat{s})$ or $p(o'|a,o)=p(\hat{s}'|a,\hat{s})p(\hat{z}'|a, \hat{z}, \hat{s}')$.
\Cref{propos:disentangle} suggests that we can use the \Cref{eq:noise_state_independ} and \Cref{eq:reward_reservation} as constraints to ensure the transition preservation and reward preservation \footnote{Trivially by restricting the size of estimated state space the ``no redunancy'' constraints can be enforced. }, 
thereby enabling the learned state estimation function being able to disentangle latent states and latent noise from observations. \Cref{propos:conditional_independent} suggests that a more general class of transition preservation constraints, e.g., refer to details in Eq. \eqref{eq:state_preservation_new}, can be provided to disentangle latent states and latent noise for general POMDP where the beliefs of latent states and latent noise are dependent.

%\paragraph{Continuous state space}
% \haiyao{Previously, the identifiability of disentanglement of state and noise was established in the context of continuous state and noise spaces. This involved making assumptions about the smoothness and invertibility of the observation function, as well as certain geometric properties of the state space \citep{liu2023learning,huang2022action}. As these results rely on specific properties of continuous function such as smoothness, these results cannot be easily extended to discrete cases. For discrete cases, we lack properties such as smoothness and thus it is more challenging. Furthermore, the results on discrete space can often be easily extended to continuous space as we can always convert continuous state spaces into discrete ones through quantization.} 
%In the context of a continuous state space, our theoretical findings remain valid. This can be demonstrated by substituting the summation used in discrete cases with integration in continuous cases.

\section{Disentangling State in Belief World Model}
%\section{Disentangling Belief State with General Assumption}
\begin{figure*}[t]
    \centering
    \includegraphics[width=1\textwidth]{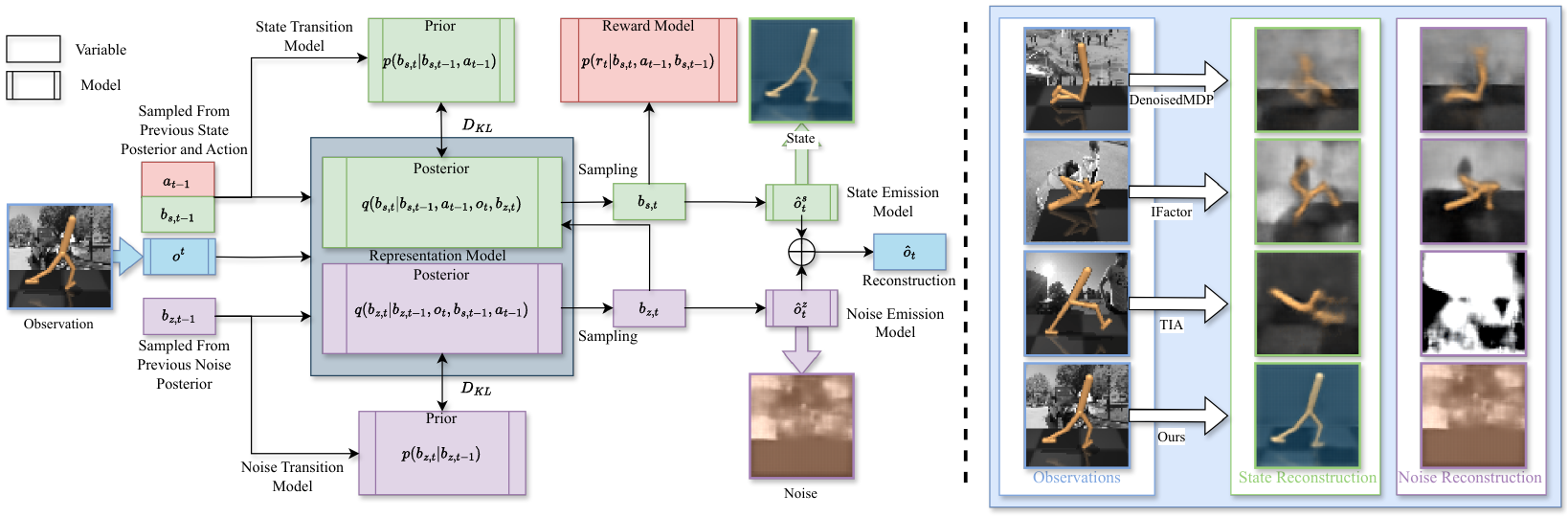}
    \caption{{The overall illustration of our world model consists of \textcolor{mygreen}{state belief RSSM} (in \textcolor{mygreen}{green}) and \textcolor{mypurple}{noise belief RSSM} (in \textcolor{mypurple}{purple}). They are differentiated by the action, reward model and emission models, which are the key parts of maintaining reward preservation and transition preservation. Furthermore, RSSM for the belief of noise does more than merely model the transitions of noise, it also encapsulates the inherent uncertainty within the system, due to Bayesian nature. Visualizations of the recovered state and noise clearly show that our world model significantly outperforms {TIA} \cite{fu2021learning}, {DenoisedMDP} \cite{wang2022denoised} and {IFactor} \cite{liu2023learning} in disentangling state and noise.}}
    \label{fig:enter-label}
\end{figure*}

For general POMDP, it is well-known that exactly recovering the latent state is not possible. In this case, we can convert the POMDP to an equivalent belief-MDP \citep{rodriguez1999reinforcement}, and recover the state belief, which is defined as the conditional distribution of state given all previous observations and actions as follows
\begin{align}\label{eq:belief_def}
 b_t(s_t, z_t) &  =  p(s_t, z_t| o_{\leq t}, a_{< t}) \\ 
 \propto &  p(o_t| s_{t}, z_{t}, a_{t-1}) \sum_{s_{t-1}, z_{t-1}}b_{t-1}(s_{t-1},z_{t-1}) \notag \\ 
 & \times p(s_t, z_t|a_{t-1}, s_{t-1}, z_{t-1})\notag.
\end{align}
It is important to note that in the belief representation, even when the dynamics of state and noise are fully independent, the joint belief $b_t(s_t,z_t)$ cannot be factorized into a product of state belief and noise belief. This is due to the fact that states $s_t$ and noise $z_t$ are not conditionally independent given the observation $o_t$. In this case, the dynamics of beliefs on state and noise are not independent. 
However, we can factorize the beliefs of state and noise as
\begin{subequations}\label{eq:state_and_noise_belief}
\begin{align}
    b_t(s_t) & = p(s_t | o_{\leq t}, a_{< t}) \\ 
    & \propto p(o_{t}|s_{t}, a_{t-1}) \sum_{s_{t-1}}p(s_t|a_{t-1}, s_{t-1})b_{t-1}(s_{t-1}), \notag \\
    b_t(z_t|s_t)& = b_t(s_t, z_t) / b_t(s_t).
\end{align}    
\end{subequations}

By the above derivation, the state belief can be determined by previous belief and current raw observation. Meanwhile the reward in belief space 
\begin{align}\label{eq:reward}
    &\bar{R}(b_t, a_t, b_{t+1}) \notag\\
    & = \sum_{\mathbf{s}, \mathbf{z}}R(s_t, a_t, s_{t+1})b_t(s_t, z_t)b_{t+1}(s_{t+1}, z_{t+1}) \notag \\
    & = \sum_{s_t}R(s_t, a_t, s_{t+1})b_t(s_t)b_{t+1}(s_{t+1})  \\ 
    & = \sum_{s_t}R(s_t, a_t, s_{t+1})p(s_t|o_{\leq t}, a_{< t})p(s_{t+1}|o_{\leq t+1}, a_{< t+1})\notag
\end{align}
can also be fully determined using the state belief. Thus by \Cref{propos:conditional_independent}, as $b_t(z_t|s_t)$ is conditional independent of future state belief  $b_{t+1}(s_{t+1})$ given current state belief $b_t(s_t)$, 
a belief estimator that satisfies the following transition preservation condition 
\begin{align}\label{eq:state_preservation_new}
    p(b_{s,t}|a_{t-1}, b_{s_{t-1}})p(b_{z,t}|b_{z,t-1},b_{s,t}) = p(o_t|a_t, o_{\leq t-1}, a_{\leq t}),
\end{align}
and the reward preservation condition \eqref{eq:reward} would be sufficient to disentangle state belief and noise belief from noisy history $[o_{\leq t}, a_{<t-1}]$, where  $b_{s,t}$ denotes the parameters of $b_t(s_t)$ and $b_{z,t}$ denotes the parameters of $b_t(z_t|s_t)$ in the constraint \eqref{eq:state_preservation_new}, 
% to belief and keep \Cref{eq:belief_def}, \eqref{eq:state_and_noise_belief} and \eqref{eq:reward} satisfied. 

\subsection{World Model and Learning}
Based on our theory, we design a Recurrent State Space Model (RSSM) to estimate state belief in noisy history $[O_{\leq t}, a<{t}]$ that changes independently and can solely determine the expected reward. In our implementation, the beliefs are parametrized as Gassian distributions using Variational Auto Encoder (VAE) and thus we use $b_{s,t}$ to denote the parameters of state belief and use $b_{z,t}$ to denote the parameters of noise belief.

In our world model, naturally by \Cref{eq:belief_def}, \eqref{eq:state_and_noise_belief} and \eqref{eq:reward}, the dynamics can be defined as
\begin{subequations}\label{eq:world_model}
\begin{align}
&\textbf{State Transition Model:}&   p_\psi(b_{s,t}|b_{s,t-1},{a_{t-1}}),\\
&\textbf{Reward Model:} &  p_{\theta}(r_t|b_{s,t},a_{t-1},b_{s,t-1}),\\
&\textbf{Noise Transition Model:}&   p_\psi(b_{z,t}|b_{z,t-1}, b_{s,t}),\\
&\textbf{Emission Model:} & p_{\theta}(o_t|b_{s,t},b_{z,t}).
\end{align}
\end{subequations}
Meanwhile, particularly for RL problems with the noisy image as observation, as shown in \Cref{fig:enter-label} and \Cref{samples}, can often be roughly modelled as additive noise. In this case, the emission model can be factorized into two asymmetric reconstructions, denoted as
\begin{align}
     p_{\theta}(o_t|b_{s,t},b_{z,t}) = p(o_t|o_t^s, o_t^z)p(o_t^z|b_{z,t})p(o_t^s|b_{z,t}).
\end{align}
Put these together, we can employ Maximum Likelihood Estimation (MLE) to align the model with data distribution using the following loss
\begin{align} \label{eq:MLE_loss}
    \mathcal{L}_{\text{MLE}} = -\mathbb{E}_{o_{\le t}, a_{<t}, r_t\sim p_{data}}\log p_{\theta,\psi}(o_t, r_t|o_{\le t - 1}, a_{< t - 1}, t).
\end{align}
As the true underlying dynamics satisfy \Cref{eq:belief_def}, \eqref{eq:state_and_noise_belief} and \eqref{eq:reward}, the optimal MLE estimation satisfies the transition preservation constraints \eqref{eq:state_preservation_new} and the reward preservation constraints. Thus according to \Cref{propos:conditional_independent}, the estimated world model will be equivalent to the underlying POMDP.

To efficiently optimize over \eqref{eq:MLE_loss}, we introduce the following posterior to approximate the true posterior distribution: 

\begin{equation}
\label{poster}
    \begin{split}
    q_\psi(b_{s,t},b_{z,t}&|b_{s,t-1}, a_{t-1},b_{z,t-1},o_t)= \\
    &\quad \underbrace{q_\psi(b_{s,t}|b_{s,t-1}, a_{t-1},o_t,b_{z,t})}_{\text{state representation model}}\\ \times 
    & \quad \underbrace{q_\psi(b_{z,t}|b_{z,t-1},o_t,b_{s,t-1}, a_{t-1})}_{\text{noise representation model}}.
    \end{split}
\end{equation}
% Note that prior research efforts \citep{fu2021learning}  have sought to alleviate computational complexities by adopting a simplified factorized variational posterior. In these approaches, the factorized variational posteriors for $b_{s,t}$ and $b_{z,t}$ are assumed to be mutually independent, conditioned on the observational data $o_t$. While this simplifies the optimization process, it does, however, deviate from the fundamental principle that $b_{s,t}$ and $b_{z,t}$ should exhibit mutual dependence, given the presence of observational data $o_t$. Driven by this consideration, we emphasize the superiority of the proposed structural variational posterior, as defined in Eq. \ref{poster}. This formulation effectively captures the interdependency between $b_{s,t}$ and $b_{z,t}$ given the observational data $o_t$ offering a more accurate representation compared to the commonly used factorized variational posterior.

Consequently, when combining the priors with the structural variational posterior, we derive the following loss obtained from evidence lower bound (ELBO):
\begin{equation}
    \label{eq:eblo}
    \begin{split}
    &\Lcal_\text{ELBO} =
    -\mathbb{E}\bigg[  \underbrace{{\log p(o_t|b_{s,t},b_{z,t})} p_\theta(r_t|b_{s,t},a_{t-1},b_{s,t-1})}_{\text{observation and reward reconstruction loss}} \\
    &-\underbrace{\alpha D_{KL}(q_\psi(b_{s,t}|b_{s,t-1}, a_{t-1},o_t,b_{z,t})\| p_\psi(b_{s,t}|b_{s,t-1},{a_{t-1}}))}_{\text{KL divergence between prior/posterior of state belief}}\\
    &-\underbrace{\beta D_{KL}(q_\psi(b_{z,t}|b_{z,t-1},o_t,b_{s,t-1}, a_{t-1})\| p_\psi(b_{z,t}|b_{z,t-1})) )}_{\text{KL divergence between prior/posterior of noise belief}}\bigg].
    \end{split}
\end{equation}
In our empirical approach, as the dynamics of state belief and noise belief are roughly independent, we employ a mean field approximation to reduce the complexity of our model.  Specifically, we use $p_\psi(b_{z,t}|b_{z,t-1})$ to approximate $p_\psi(b_{z,t}|b_{z,t-1}, b_{s,t})$ . 
We also introduce two hyper-parameters, denoted as $\alpha$ and $\beta$, to effectively enforce the Kullback–Leibler divergence between the variational posterior and prior. It is worth noting that even with the incorporation of $\alpha$ and $\beta$, the modified ELBO continues to serve as an approximate lower bound on the marginal log-likelihood of the observational data in \eqref{eq:MLE_loss}. 
Overall, the proposed RSSM learning objective comprises two key components: the reconstruction loss on reward and observation, and the KL-divergence between prior and posterior. The reconstruction loss on observations, along with KL-divergence encourages the transition preservation constraints in \Cref{eq:state_preservation_new} to be satisfied, and the reconstruction loss on rewards encourages the reward preservation to be satisfied.

\paragraph{Policy Learning}
Similar to the previous works \citep{hafner2020dreamerv2,wang2022denoised}, we also employ online learning to optimize policies through the training of an actor-critic model on the latent state trajectories. These trajectories are exclusively composed of the generated state belief derived from the state belief dynamics model since the absence of noise could improve the sample efficiency of the actor-critic model.

\section{Experiments}

We evaluate the compared methods in diverse image observation contexts, specifically within the DeepMind Control Suite (DMC) \cite{tunyasuvunakool2020dm_control} and RoboDesk \cite{kannan2021robodesk}. The DMC assessment involves six control tasks, namely `Walker Run`, `Cheetah Run`, `Finger Spin`, `Reacher Easy`, `Cartpole Swingup` and `Ball in cup Catch`, with a $64\times64\times3$ pixel observation space, where the methods are tested against environments infused with noisy distractors. The distractor is introduced by replacing the standard blue background with grayscale videos from Kinetics 400's "Driving car" category. In previous methods like TIA \cite{fu2021learning}, DenoisedMDP \cite{wang2022denoised}, and IFactor \cite{liu2023learning}, the same video is picked as the background during all the training episodes. This setting makes the noise easily captured because the noise remains invariant across training episodes. In contrast, in each training episode,
 we involve a randomly selected video as background to mimic the real-world scenario, and further test the methods against unseen noise. As a result, three kinds of DMC environments are compared: \textbf{Noiseless} DMC, DMC with a \textbf{uniform background} (same noise during training), and DMC with \textbf{diverse backgrounds} (changing noise during training).
In the RoboDesk task, with a $96\times96\times3$ pixel observation space, the environment is adapted from DenoisedMDP. Here, the agent's goal is to turn a TV green by pushing a button, with rewards based on the TV's greenness \cite{wang2022robodeskdistractor}. The task is made more challenging by adding noise factors such as Kinetics 400 videos (the same as the DMC setting), environmental flickering, and camera jitter. Training and test noise are kept separate for rigorous testing.
\begin{table*}[h!]
\centering
\resizebox{0.8\textwidth}{!}{%
\small
\begin{tabular}{c|ccccc}
\toprule
\textbf{Task} & \textbf{DreamerV3} & \textbf{TIA} & \textbf{DenoisedMDP} & \textbf{IFactor} & \textbf{Ours}\\ 
\midrule
\textbf{Cartpole Swingup}   & \secondbest{167.15 $\pm$ 54.00}   & 122.28 $\pm$  13.34   & 132.97 $\pm$ 5.44 & 144.72 $\pm$ 27.86 & \best{195.13$\pm$19.00} \\
\textbf{Cheetah Run}       & 144.03 $\pm$ 59.20  & \secondbest{351.91 $\pm$ 127.79} & 221.83 $\pm$ 41.74 & 248.35$\pm$ 104.87 & \best{520.57 $\pm$ 130.84} \\
\textbf{Ball in cup Catch}  & 64.77 $\pm$ 112.19  &31.31 $\pm$ 41.27  & 50.14 $\pm$ 31.08 & \secondbest{134.60 $\pm$ 60.60} & \best{272.60 $\pm$ 136.25} \\
\textbf{Finger Spin}      & 310.66 $\pm$ 121.82 & 282.74 $\pm$ 230.41  & 75.11 $\pm$ 64.41 & \secondbest{566.38 $\pm$ 344.57}& \best{615.01 $\pm$ 211.31}\\
\textbf{Reacher Easy}    & \best{441.33 $\pm$ 472.06} & 86.98 $\pm$ 53.85  & 101.28 $\pm$ 19.60 & \secondbest{176.55 $\pm$ 83.70} &71.43 $\pm$ 13.29\\
\textbf{Walker Run}        & 108.52 $\pm$98.38   & 315.87 $\pm$ 127.00   & 74.36 $\pm$ 19.57  &  68.77 $\pm$ 20.00 & \best{437.20 $\pm$ 71.80}\\ 
\bottomrule
\end{tabular}%
}
\caption{DMC with diverse background results, and the performances were evaluated based on the mean of 10 episode returns and standard deviation from three independent runs. The best results are highlighted in \best{Best Style} and the second best is highlighted in \secondbest{Second Best style}.}
\label{dbg_table}
\end{table*}
\subsection{Baselines}
The proposed method is compared with four model-based baselines: DreamerV3 \cite{hafner2023dreamerv3}, Task Informed Abstractions (TIA) \cite{fu2021learning}, DenoisedMDP \cite{wang2022denoised}, and IFactor \cite{liu2023learning}.
\textbf{DreamerV3}  \cite{hafner2023dreamerv3} the latest version of the Dreamer model, featuring a larger and more robust architecture with a single RSSM optimized for general RL tasks.
\textbf{TIA} \cite{fu2021learning} also designed separate but symmetric dynamics and decoder for state and noise, with the objective of minimizing the reward log-likelihood given noise.
Both \textbf{DenoisedMDP} \cite{wang2022denoised} and \textbf{IFactor} \cite{liu2023learning} seek to achieve this disentanglement by separating the uncontrollable factors, controllable factors, and reward-irrelevant factors in the latent space.
IFactor is also a state-of-the-art reconstruction-based MBRL method that identifies inter-causal relations based on a strong assumption. 
TIA, DenoisedMDP, and IFactor are specifically tailored for operation in noisy environments, whereas DreamerV3 was not designed with a primary focus on denoising. The details of the experiments setting and hyperparameters are provided in Supplementary \ref{sup:hyper}.

\subsection{Performace on DMC tasks}
The performance and reconstruction results in the diverse DMC are presented in Table \ref{dbg_table}, Figure \ref{fig:dbg}, and Table \ref{samples}. These results demonstrate that our method not only ensures optimal disentanglement but also achieves superior performance in policy learning. Similarly, our method also exhibits exceptional performance compared with other approaches in most tasks in Uniform DMC (details in Table \ref{ubg_table} and Figure \ref{fig:ubg_chart}), where disentangling is easier than that in Diverse DMC. In noiseless DMC tasks, our method shows comparable performance when measured against the SOTA performance, detailed in Table \ref{noiseless_table} and Figure \ref{fig:noiseless_chart}.

\paragraph{Noise belief matters}
In a comparative analysis, DreamerV3 underperforms in most tasks, even with its largest scale model, highlighting structural limitations in its single transition and decoder framework, which is inadequate for effective denoising. In contrast, our method, which models state belief and noise belief separately, excels by better preserving transition dynamics. Our noise belief RSSM not only accounts for noise transitions but also captures the inherent uncertainty within the system. This is evident in the reconstruction results in Table \ref{samples}, where reconstructions of noise belief lack the detail seen in state belief reconstructions. This discrepancy likely arises from the complex and uncertain nature of latent noise. By effectively managing both noise and system uncertainty within the noise belief RSSM, our method achieves a clearer and more distinct state representation from noisy observations. The results underscore the advantages of separating noise and state dynamics into distinct models, leading to more precise and dependable state estimation in environments characterized by substantial observational noise and uncertainty.

\begin{table*}[h!]
\centering
\resizebox{\textwidth}{!}{%
\begin{adjustbox}{valign=c}
\begin{tabular}{cccccc}
\hline
\textbf{ }&\textbf{Dreamer} & \textbf{TIA} & \textbf{DenoisedMDP}& \textbf{IFactor} &\textbf{Ours} \\
\hline
\textbf{Raw Observation} & \includegraphics[width=0.2\linewidth,height=0.3\linewidth]{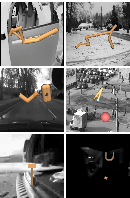} &\includegraphics[width=0.2\linewidth,height=0.3\linewidth]{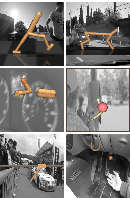}&\includegraphics[width=0.2\linewidth,height=0.3\linewidth]{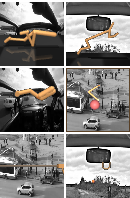}&\includegraphics[width=0.2\linewidth,height=0.3\linewidth]{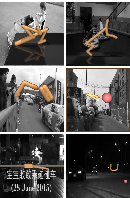}&\includegraphics[width=0.2\linewidth,height=0.3\linewidth]{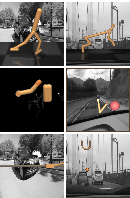}  \\
\hline
\textbf{State Reconstruction} & \includegraphics[width=0.2\linewidth,height=0.3\linewidth]{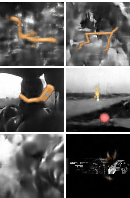}&\includegraphics[width=0.2\linewidth,height=0.3\linewidth]{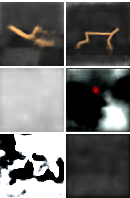}&\includegraphics[width=0.2\linewidth,height=0.3\linewidth]{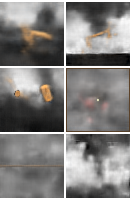}&\includegraphics[width=0.2\linewidth,height=0.3\linewidth]{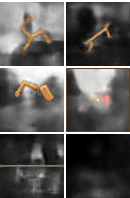}&\includegraphics[width=0.2\linewidth,height=0.3\linewidth]{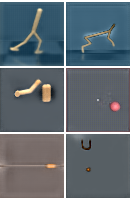}  \\ \hline
\textbf{Noise Reconstruction} & &\includegraphics[width=0.2\linewidth,height=0.3\linewidth]{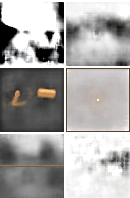}&\includegraphics[width=0.2\linewidth,height=0.3\linewidth]{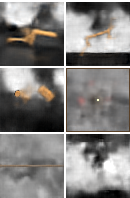}&\includegraphics[width=0.2\linewidth,height=0.3\linewidth]{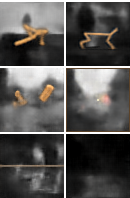}&\includegraphics[width=0.2\linewidth,height=0.3\linewidth]{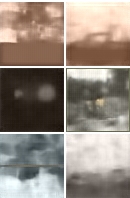}  \\
\hline
\end{tabular}
\end{adjustbox}
}
\caption{The reconstruction visualization of 6 DMC tasks with diverse backgrounds, demonstrates the reconstruction of state and noise from different world models. Our models exhibit clear decoupling between the state belief and the noise belief, whereas other compared methods produce numerous erroneous reconstructions and consistently sacrifice the state observation.}
\label{samples}
\end{table*}

\paragraph{Transition preservation and reward preservation are sufficient}
By maintaining transition preservation and reward preservation throughout the reconstruction and KL objectives, our method demonstrates superior performance across most tasks, as detailed in Tables \ref{dbg_table} and \ref{samples}, and Figure \ref{fig:dbg}. The reconstructed samples from the learned beliefs of state and noise highlight our method’s superior disentanglement capabilities compared to other denoising techniques, particularly in tasks like 'Cartpole Swingup' and 'Ball in Cup Catch.' Here, our method excels in separating clear state information from noise, outperforming competitors that capture minimal state details, especially in robotic tasks. Additionally, our approach leads in policy learning, securing the best performance in 5 of 6 tasks, as shown in Table \ref{dbg_table} and Figure \ref{fig:dbg}. The exception is the Reacher Easy task, where DreamerV3, not tailored for denoising, performs best. These results underscore that even without detailed identification of each latent state component, our approach achieves effective disentanglement between state and noise, facilitating the development of an optimal policy under mild assumptions.

\subsection{Performance on Robodesk}
Given the provision of Robodesk, which offers a comprehensive reward-related state belief observation (with a high signal-to-noise ratio), such as assessing the greenness of the TV screen, we find that the performance of the compared methods is closely matched. Therefore, the performance in this tailored environment may not reflect the true disentanglement ability of the compared method. The corresponding results are presented in Supplementary \ref{robodesk} and Figure \ref{fig:robodesk_chart}
.

\subsection{Ablation Study}
\begin{table*}[h!]
\resizebox{\textwidth}{!}{%
% \centering
\begin{tabular}{c|cccccc}
\toprule
\textbf{Task} & \textbf{Cartpole Swingup} & \textbf{Cheetah Run} & \textbf{Ball in cup Catch} & \textbf{Finger Spin} & \textbf{Reacher Easy} & \textbf{Walker Run}\\ 
\midrule
\textbf{Asymetric reconstructions}   &  \best{195.13 $\pm$ 19.00}   & \best{520.57 $\pm$ 130.84}   & \best{272.60 $\pm$ 136.25} & \best{615.01 $\pm$ 211.31} & 71.43 $\pm$ 13.29 & \best{437.20 $\pm$ 71.80}\\
\textbf{Symetric reconstructions}       & 141.94 $\pm$ 10.85  & 292.49 $\pm$ 94.19 & 126.59 $\pm$ 35.30 &269.87 $\pm$ 218.16 & \best{90.41 $\pm$ 22.43} &320.56 $\pm$ 247.30\\
\textbf{WO reward}   & 87.88 $\pm$ 33.96   & 57.80 $\pm$ 71.91   & 24.62 $\pm$40.95 & 1.37 $\pm$ 1.36 & 67.95 $\pm$ 17.83 & 23.89 $\pm$ 16.62\\
\textbf{WO KL divergence}       & 107.41 $\pm$26.32  & 20.36 $\pm$  8.42 & 51.28 $\pm$  19.20 & 3.71 $\pm$ 0.84 & 78.15 $\pm$ 16.65 &29.57 $\pm$ 15.74\\
\bottomrule
\end{tabular}%
}
\caption{Abaltion study results. 4 configurations of the world model were tested on Diverse DMC tasks: asymmetric reconstruction, symmetric reconstruction, no reward objective, and no KL divergence objective. With asymmetric reconstruction, our proposed method, outperformed the others in nearly all tasks except Reacher Easy, with the most significant outcomes highlighted in \best{Best Style}.}
\label{ablation_table}
\end{table*}
\textbf{Asymmetric Reconstructions, Reward Objective, and KL Divergence Objective Matter}
In the ablation study, we assess performance differences among symmetric reconstructions, asymmetric reconstructions, and models omitting reward and KL divergence objectives, using the DeepMind Control (DMC) suite with diverse backgrounds. The symmetric setup utilized dual 8-layer decoders, whereas the asymmetric employed an 8-layer state belief decoder alongside a 4-layer noise belief decoder. Results in Table \ref{ablation_table} and Figure \ref{fig:ablation}in the supplementary materials illustrate the superior disentanglement provided by asymmetric reconstructions. This setup reduces the likelihood of confusing noise belief with state belief, unlike symmetric reconstructions. As further evidenced in Table \ref{dbg_table}, Table \ref{samples}, and Figure \ref{fig:dbg}, the asymmetric reconstruction significantly boosts disentanglement efficiency, a key distinction from the TIA model. This confirms that separated asymmetric structures enhance transition preservation. Additionally, models lacking either the reward objective or the KL divergence objective demonstrate compromised functionality, underscoring the essential nature of these objectives in ensuring effective transition and reward preservation.

\section{Conclusion}

In this work, we revisited the challenge of latent state estimation in RL under noisy conditions, focusing on the role of causal identifiability in ensuring the reliable recovery of latent states. While previous studies have relied on assumptions that can be impractical within the RL context, such as the necessity for independent subsets of latent states and invertible observation functions, we demonstrated that these assumptions could be significantly relaxed. By incorporating RL-specific perspectives—particularly through leveraging rewards and transitions—we established a more general approach to identifiability in partially observable Markov decision processes (POMDPs). Our theoretical findings show that the optimal recovery of latent states requires neither independence among latent state subsets nor invertibility of the observation function.

Furthermore, we proposed a novel methodology for general POMDPs that integrates two simple yet effective constraints: transition reservation and reward reservation. This methodology guarantees the disentanglement of state and noise in a manner faithful to the underlying dynamics, which we validated empirically across extensive benchmark control tasks. The results highlight the efficacy of our approach, outperforming existing algorithms by ensuring more effective disentanglement of state belief from noise belief. By bridging the gap between causal theory and practical RL applications, this work opens new possibilities for designing more robust RL algorithms, offering a foundation for future advancements in RL under uncertainty.

% Recent studies that delve into the intersection of causality and RL have made notable strides. However, they often overlook certain crucial aspects of problem settings within the realm of RL. For instance, some of these works make strong assumptions about the geometry of the state space, as well as assumptions concerning the mapping from state/noise to observation. In our approach, we establish the identifiability of the latent state and noise up to an equivalent MDP with the assistance of the policy as auxiliary variables. leveraging the Guide from the expected cumulative rewards and Markovian property of MDP, The problem of identifiability in the state/noise disentangle problem is transformed into transition preservation and reward preservation problem so that the strong assumptions made in previous work can be removed. This relaxation of assumptions armed with the concept of belief-MDP bridges the gap between fundamental theoretical concepts and their practical applicability in the general POMDP scenario. We translate our insights into a novel methodology, incorporating unique design elements such as employing two asymmetric RSSMs and enforcing independence between state belief and noise belief through KL divergence penalties. Our empirical findings, derived from comprehensive benchmark control tasks, substantiate the superiority of our proposed approach compared to existing algorithms in effectively disentangling signals from noise.

\bibliography{main}
\clearpage
\clearpage
\appendix
\section{Reproducibility Checklist}
This paper:

Includes a conceptual outline and/or pseudocode description of AI methods introduced (yes)\\
Clearly delineates statements that are opinions, hypothesis, and speculation from objective facts and results (yes)\\
Provides well marked pedagogical references for less-familiare readers to gain background necessary to replicate the paper (yes)\\
Does this paper make theoretical contributions? (yes)\\

If yes, please complete the list below.

All assumptions and restrictions are stated clearly and formally. (yes)\\
All novel claims are stated formally (e.g., in theorem statements). (yes)\\
Proofs of all novel claims are included. (yes)\\
Proof sketches or intuitions are given for complex and/or novel results. (yes)\\
Appropriate citations to theoretical tools used are given. (yes)\\
All theoretical claims are demonstrated empirically to hold. (yes)\\
All experimental code used to eliminate or disprove claims is included. (yes)\\
Does this paper rely on one or more datasets? (yes)\\

If yes, please complete the list below.

A motivation is given for why the experiments are conducted on the selected datasets (yes)\\
All novel datasets introduced in this paper are included in a data appendix. (no)\\
All novel datasets introduced in this paper will be made publicly available upon publication of the paper with a license that allows free usage for research purposes. (yes)\\
All datasets drawn from the existing literature (potentially including authors’ own previously published work) are accompanied by appropriate citations. (yes)\\
All datasets drawn from the existing literature (potentially including authors’ own previously published work) are publicly available. (yes)\\
All datasets that are not publicly available are described in detail, with explanation why publicly available alternatives are not scientifically satisficing. (yes)\\
Does this paper include computational experiments? (yes)\\

If yes, please complete the list below.

Any code required for pre-processing data is included in the appendix. (yes).\\
All source code required for conducting and analyzing the experiments is included in a code appendix. (yes)\\
All source code required for conducting and analyzing the experiments will be made publicly available upon publication of the paper with a license that allows free usage for research purposes. (yes)\\
All source code implementing new methods have comments detailing the implementation, with references to the paper where each step comes from (yes)\\
If an algorithm depends on randomness, then the method used for setting seeds is described in a way sufficient to allow replication of results. (yes)\\
This paper specifies the computing infrastructure used for running experiments (hardware and software), including GPU/CPU models; amount of memory; operating system; names and versions of relevant software libraries and frameworks. (yes)\\
This paper formally describes evaluation metrics used and explains the motivation for choosing these metrics. (yes)\\
This paper states the number of algorithm runs used to compute each reported result. (yes)\\
Analysis of experiments goes beyond single-dimensional summaries of performance (e.g., average; median) to include measures of variation, confidence, or other distributional information. (yes)\\
The significance of any improvement or decrease in performance is judged using appropriate statistical tests (e.g., Wilcoxon signed-rank). (yes)\\
This paper lists all final (hyper-)parameters used for each model/algorithm in the paper’s experiments. (yes)\\
This paper states the number and range of values tried per (hyper-) parameter during development of the paper, along with the criterion used for selecting the final parameter setting. (yes)\\

\clearpage

\section{Proof of Propositions}
\setcounter{theorem}{0}
\begin{theorem}[For Invertible $\Mcal$]\label{propos:Invertible}
 Given POMDP $(\mathcal{S},\mathcal{A},\mathcal{O},\mathcal{T},\mathcal{M},\Rcal, \gamma)$, if the observation function $\Mcal$ is invertible. 
    Let $g:\Ocal\mapsto \Scal\times\Zcal$ be an invertible state estimation function. If the following condition holds:
    \begin{enumerate}
    \item \textbf{No redundancy: } For different states $s_1$ and $s_2 \in \Scal$ , there exists an policy $\pi$ that $V^\pi(s_1)\neq V^\pi(s_2)$. 
    \item \textbf{Transition preservation: } For any observation $o$ and any action $a$, let $\hat{s}, \hat{z} = g(o)$, we have
    \begin{equation}
        p(o'|a,o)=p(\hat{s}'|a,\hat{s})p(\hat{z}'|\hat{z})
    \end{equation}
    \item \textbf{Reward preservation:} For any state observation pair $(s, o)$ with $\hat{s}, \hat{z} = g(o)$, for any action $a$, we have
    \begin{equation}
        \exists\hat{\Rcal} \text{ s.t. } \Rcal(o', a,o)=\Rcal(s', a,o)=\hat{\Rcal}(\hat{s}',a,\hat{s})
    \end{equation}
  \end{enumerate}
  then the estimated MDP $(\hat{\Scal}=\{\hat{s}|\exists o, \text{s.t. } \hat{s}, \hat{z} = g(o) \}, \Acal, \hat{\Tcal} = p(\hat{s}'|a, \hat{s}), \hat{\Rcal}, \gamma) $ is equivalent to the underlying MDP $(\mathcal{S},\mathcal{A},\mathcal{T},\Rcal, \gamma)$, and the state estimation function $g$ disentangles state and noise. 
\end{theorem}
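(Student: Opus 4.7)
The plan is to follow the chained value-equivalence strategy hinted at in the sketch, then close with a bijection argument driven by the no-redundancy hypothesis. Specifically, I will introduce an intermediate \emph{observation-MDP} $(\Ocal,\Acal,\Tcal_o,R_o,\gamma)$ where observations play the role of states, with $\Tcal_o(o'|a,o)=p(o'|a,o)$ and $R_o(o,a,o')=\Rcal(\Mcal^{-1}(o),a,\Mcal^{-1}(o'))$. By invertibility of $\Mcal$, any policy over $\Scal$ lifts through $\Mcal^{-1}$ to a policy over $\Ocal$ and vice versa, and the Bellman recursions are in pointwise correspondence, so the observation-MDP is value-equivalent to the underlying MDP. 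The goal is then to show the estimated MDP is also value-equivalent to this intermediate one, and combine.

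The second step uses invertibility of $g$ together with the two preservation conditions. The transition preservation identity $p(o'|a,o)=p(\hat{s}'|a,\hat{s})p(\hat{z}'|\hat{z})$, read through $g$, exhibits the observation-MDP dynamics as a product kernel whose $\hat{s}$-marginal is an autonomous transition law depending only on $\hat{s}$ and $a$ (i.e.\ $\hat{\Tcal}(\hat{s}'|a,\hat{s})$), while the reward preservation identity shows the reward along any trajectory is a function of $(\hat{s},a,\hat{s}')$ alone and never of $\hat{z}$. Hence for any policy in the estimated MDP I can lift it to a policy over $\Ocal$ (via $g^{-1}$, with the $\hat{z}$-coordinate marginalized by the factorization) attaining the same return, and conversely; chaining with the previous paragraph yields $V^{\pi}_{\text{estimated}}(\hat{s})=V^{\pi}_{\text{underlying}}(s)$ for matched $(\hat{s},s)$ under any policy. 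Next, I define $f:\hat{\Scal}\to\Scal$ by taking the $\Scal$-coordinate of $\Mcal^{-1}(g^{-1}(\hat{s},\hat{z}))$, verify this is well-defined (independent of the $\hat{z}$ chosen, using that no reward or $\hat{s}$-transition depends on $\hat{z}$), and argue $f$ is bijective. If $f$ were multi-valued at some $\hat{s}$ with images $s_1\neq s_2$, the no-redundancy hypothesis yields a policy separating $V^{\pi}(s_1)\neq V^{\pi}(s_2)$, contradicting determinism of the state value in the estimated MDP at $\hat{s}$. Symmetrically, if distinct $\hat{s}_1,\hat{s}_2$ mapped to the same $s$, value-equivalence would force $V^{\pi}(\hat{s}_1)=V^{\pi}(\hat{s}_2)$ for every $\pi$, contradicting no-redundancy inside the estimated MDP (which inherits the property by value-equivalence). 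Bijectivity of $f$, together with the reward/transition agreement, gives the equivalence of MDPs, and isolating the $\hat{s}$-coordinate as the state component is precisely the disentanglement claim.

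The main obstacle I expect is the second step's careful bookkeeping: namely, showing cleanly that the observation-MDP's dynamics, when pushed through $g$, really decouple into an autonomous $\hat{s}$-subsystem whose Bellman operator depends on neither the marginal distribution of $\hat{z}$ nor on how policies might in principle condition on it. This is the technical linchpin because it is simultaneously where invertibility of $g$, transition preservation, and reward preservation must all be used, and where one must be explicit that policies may be taken to depend only on $\hat{s}$ without loss of generality. Once this decoupling is laid out, the remaining value-equivalence and bijection arguments are essentially algebraic consequences of the hypotheses.
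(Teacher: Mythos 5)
Your proposal is correct and follows essentially the same route as the paper's own proof: both pass through an intermediate observation-as-state MDP, establish value equivalence on each side via the Bellman recursion (using invertibility of $\Mcal$ on one side and the transition/reward preservation conditions on the other), and then close with the same bijection argument combining no-redundancy with the determinism of state values under a fixed policy. The extra care you flag about well-definedness of $f$ and restricting policies to depend only on $\hat{s}$ is a welcome tightening of details the paper leaves implicit, but it does not change the argument's structure.
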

% \begin{proof}
% The existence of $h$ is trivial as the invert of $\Mcal$ must satisfy the transition preservation and reward preservation condition. 
%     Then for a $h$ function satisfies the two conditions, it would be straightforward to show that the estimated state belief $\hat{s}$ is a valid representation (defined in Definition 3.1 of \citet{bennouna2021learning}) of the original MDP. By Proposition 3.3 of \citet{bennouna2021learning}, $\Scal$ is the unique minimal state belief representation of the MDP given the no equivalent state belief and deterministic assumption. Then by the fact the space of $\hat{s}$ is also $\Scal$, it would be trivial that there must exist a bijection between the estimated state belief and the true state belief. 
%     Then by the fact that $\Mcal$ and $h$ are invertible, there must exist a bijection between the estimated noise belief and the true noise belief. 
% \end{proof}
\begin{proof}
  For any policy $\pi: \Scal \mapsto \Acal$,   $\forall s \in \Scal$ and for any $o$ that satisfies $s,z=\Mcal(o)$, we can define a map $\tilde\Mcal: s = \tilde\Mcal(o)$
  \begin{subequations}\label{eq:derivation_v_equ}
    \begin{align}
        V^{\pi}(s) =& \sum_{s'}p(s'|\pi(s),s)[\gamma V^{\pi}(s')+\Rcal(s',\pi(s),s)] \notag \\ 
                   =&  \sum_{s', z'}\underbrace{p(s'|\pi(s), s)p(z'|z)}_{\text{By the causal graph \Cref{generative model}}}[\gamma V^{\pi}(\tilde{\Mcal}(o')) \notag \\
                    &+ \Rcal(\tilde{\Mcal}(o'), \pi \circ \tilde\Mcal (o),\tilde{\Mcal}(o))] \\
                   = & \sum_{o'}p(o'|\pi\circ\tilde{\Mcal}(o), o)[\gamma V^{\pi\circ\tilde{\Mcal}}(o') \\ 
                    &+ \Rcal(\tilde{\Mcal}(o'), \pi \circ \tilde\Mcal (o),\tilde{\Mcal}(o))]\notag \\ 
                   = & V^{\pi\circ\tilde{\Mcal}}(o),
    \end{align}
  \end{subequations}
  which indicates that the underlying MDP is value equivalent to an MDP that uses observation as a state. Similarly, for the estimated MDP, for any policy $\hat{\pi}: \hat{\Scal}\mapsto \Acal$, $\forall \hat{s} \in \hat{\Scal}$ 
    and for any $o$ that satisfies $\hat{s},\hat{z}=g(o)$, we can define a map $\tilde g: \hat s = \tilde{g}(o)$
    Then  consider  $\hat{s},\hat{z}=g(o)$, we have
    \begin{subequations}\label{eq:inv_v_equ}
    \begin{align}
        V^{\hat{\pi}}(\hat{s}) = & \sum_{\hat s'}p(\hat s'|\hat\pi(\hat s),\hat s)[\gamma V^{\hat \pi}(\hat s')+\hat \Rcal(\hat s',\hat \pi(\hat s),\hat{s})] \notag \\ 
         =&  \sum_{\hat s', \hat z'}\underbrace{p(\hat{s}'|\hat{\pi}(\hat s), \hat s)p(\hat{z}'|\hat z)}_{\text{By the transition preservation condition}}[\gamma V^{\hat \pi}(\tilde{g}(o')) \notag \\
          &+ \underbrace{\Rcal(\tilde{g}(o'), \hat{\pi }\circ \tilde{g} (o),\tilde{g}(o))}_{\text{By reward preservation condition}}]\\
                   = & V^{\hat\pi\circ\tilde{g}}(o).
    \end{align}
  \end{subequations}
  Then the estimated MDP must be equivalent to the underlying MDP. Then consider the mapping $f:\hat{\Scal}\mapsto\Scal$, if it maps $\hat{s}$ to multiple $s\in\Scal$, then for some policy $\hat{s}$ must have more than one value, which is in contradict with the fact that for any MDP, with given policy the value of a state is deterministic. Similar, by the No redundancy condition $f^{-1}$ can not map $s\in\Scal$ to multiple $\hat{\Scal}$. 

    In the above proof, we proved the theorem in discrete cases, but it would be straightforward to extend the result to continuous state and observation spaces. 
  \end{proof}

  \begin{repproposition}{propos:conditional_independent}
    Under the assumptions of Proposition \ref{propos:disentangle}, the transition preservation condition in Proposition \ref{propos:disentangle} can be extended as $\hat{z}$ is conditional independent with future state $\hat{s}'$ given $\hat{s}$, and the results of Proposition \ref{propos:disentangle} still hold.
  \end{repproposition}
  \begin{proof}
    Consider the decomposition of joint distribution
    \[
      p(s', z'|a, s, z)
    \]
    under the constraints that $z$ is conditional independent with $s'$ given $s$ as well as action $a$, the possible decompsitions are
    \begin{subequations}
      \begin{align}
        p(s', z'|a, s, z) & = p(s'|a, s)p(z'|z),\\
        p(s', z'|a, s, z) & = p(s'|a, s)p(z'|a, z),\\
        p(s', z'|a, s, z) & = p(s'|a, s)p(z'|a, z, s),\\
        p(s', z'|a, s, z) & = p(s'|a, s)p(z'|a, z, s'),\\
        p(s', z'|a, s, z) & = p(s'|a, s)p(z'|a, z, s, s').
      \end{align}
    \end{subequations}
    For each decomposition, the derivation in \Cref{eq:derivation_v_equ} still holds, and \Cref{eq:inv_v_equ} also holds, thus the results in \Cref{propos:Invertible} still holds. 
  \end{proof}

\section{Generative model}
Our theoretical contributions apply to a wide range of tasks with different generative graphs e.g. in Figure \ref{generative model}, which elucidates the role of actions as auxiliary variables from a causal standpoint. In the RL framework, modifications in the policy directly affect the distribution of cumulative rewards, a relationship that remains unaffected by variations in the noise. This delineation establishes the foundation for our investigation, which aims to devise methodologies for effectively distinguishing and separating the states from the noise under minimal assumptions. By capitalizing on the Markovian properties inherent to Markov Decision Processes (MDPs), we contend that assessing whether changes in the state influence the expected return is equivalent to analyzing the transition preservation and reward preservation.

\begin{figure}[t]
\centering
\includegraphics[width=0.3\textwidth]{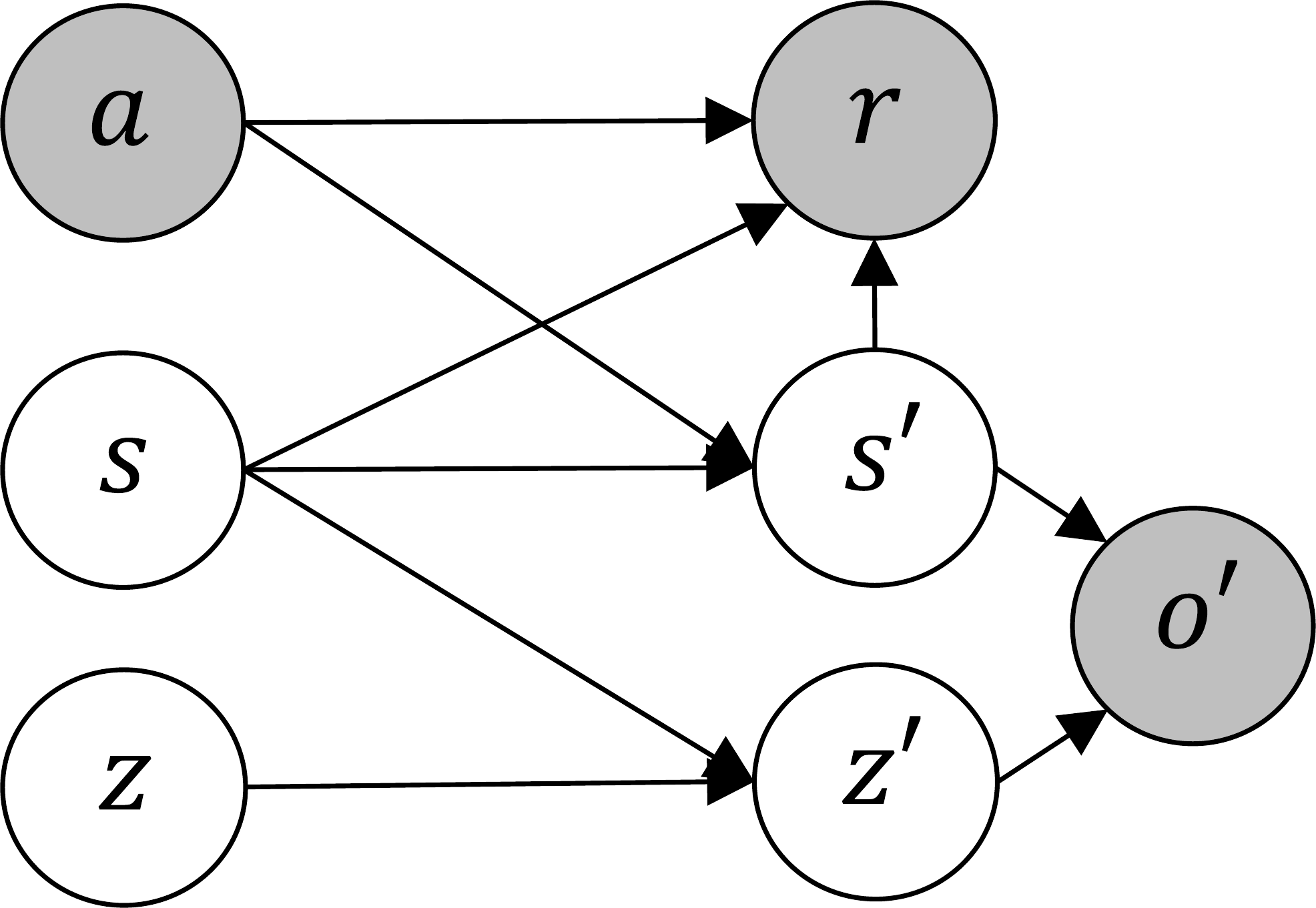}
\caption{This is a typical generative model for RL on MDP with the noisy observation, where  $s$ denotes the latent state while $z$ represents the latent noise, both of which remain unobservable. Previous work \cite{huang2022action,liu2023learning} is built on different generative models, e.g. dividing $s$ into independent parts with additional assumptions, which could hinge the application in the real world. In contrast, our work is not limited by a specific generative model and guarantees that the estimated MDP is equivalent to the underlying true MDP when transition preservation and reward preservation are maintained. 
}
\label{generative model}
\end{figure}

\section{Perfomance on DMC vatiants}
\label{performance}
\subsection{Noiseless DMC}
The performance of the compared method in Noiseless DMC environments is illustrated in  \Cref{noiseless_table} and Figure \ref{fig:noiseless_chart}. In these noiseless environments, symmetric decoders are employed for both state belief and noise belief. Due to the minimal presence of noise, the raw observation can be considered a state belief observation, rendering only the state belief decoder necessary. If we were to retain only one decoder, the proposed structure would be reduced to a typical dreamer model.
\begin{table*}[h]
\centering
\resizebox{0.9\textwidth}{!}{%
\begin{tabular}{c|ccccc}
\hline
\textbf{Task} & \textbf{DreamerV3} & \textbf{TIA} & \textbf{DenoisedMDP} & \textbf{IFactor} & \textbf{Ours} \\ 
\hline
\textbf{Cartpole Swingup}  & \secondbest{824.62 $\pm$ 59.36}   & 716.55 $\pm$ 151.01 & 555.49 $\pm$ 416.89  & \best{836.21 $\pm$ 26.45} &824.27 $\pm$ 12.01 \\
\textbf{Cheetah Run}       & 780.62 $\pm$ 106.69 & 573.56 $\pm$ 361.39  & \best{815.78 $\pm$ 72.20}  & 717.14 $\pm$ 213.07 & \secondbest{797.16 $\pm$ 24.96}\\
\textbf{Ball in cup Catch} & \best{970.44 $\pm$ 9.04} & \secondbest{959.00 $\pm$ 6.38}   & 956.48 $\pm$ 8.21  & 908.54 $\pm$ 102.09 &947.77 $\pm$ 20.35\\
\textbf{Finger Spin}       &\best{939.22 $\pm$ 22.22} & 252.66 $\pm$ 219.10  & 442.76 $\pm$ 152.80 & \secondbest{695.97 $\pm$ 248.07} &469.78 $\pm$ 228.85\\
\textbf{Reacher Easy}   & 801.44 $\pm$ 166.39  & 891.66 $\pm$ 120.26 & 680.30 $\pm$ 259.33 & \secondbest{948.92 $\pm$ 39.07} & \best{965.88 $\pm$ 17.50} \\
\textbf{Walker Run}    & \best{762.29 $\pm$ 103.05} & 553.41 $\pm$ 55.30   & 629.54 $\pm$ 57.89  & 405.77 $\pm$ 78.61 & \secondbest{642.19 $\pm$ 39.93} \\\hline
\end{tabular}%
}
\caption{The performances of the compared methods in the Noiseless DMC environments. The The best results are highlighted in \best{Best Style} and the second best is highlighted in \secondbest{Second Best style}. Our method shows a comparable performance against the SOTA performance in most tasks.}
\label{noiseless_table}
\end{table*}

\begin{figure*}[h]
    \centering
    \includegraphics[width=\linewidth]{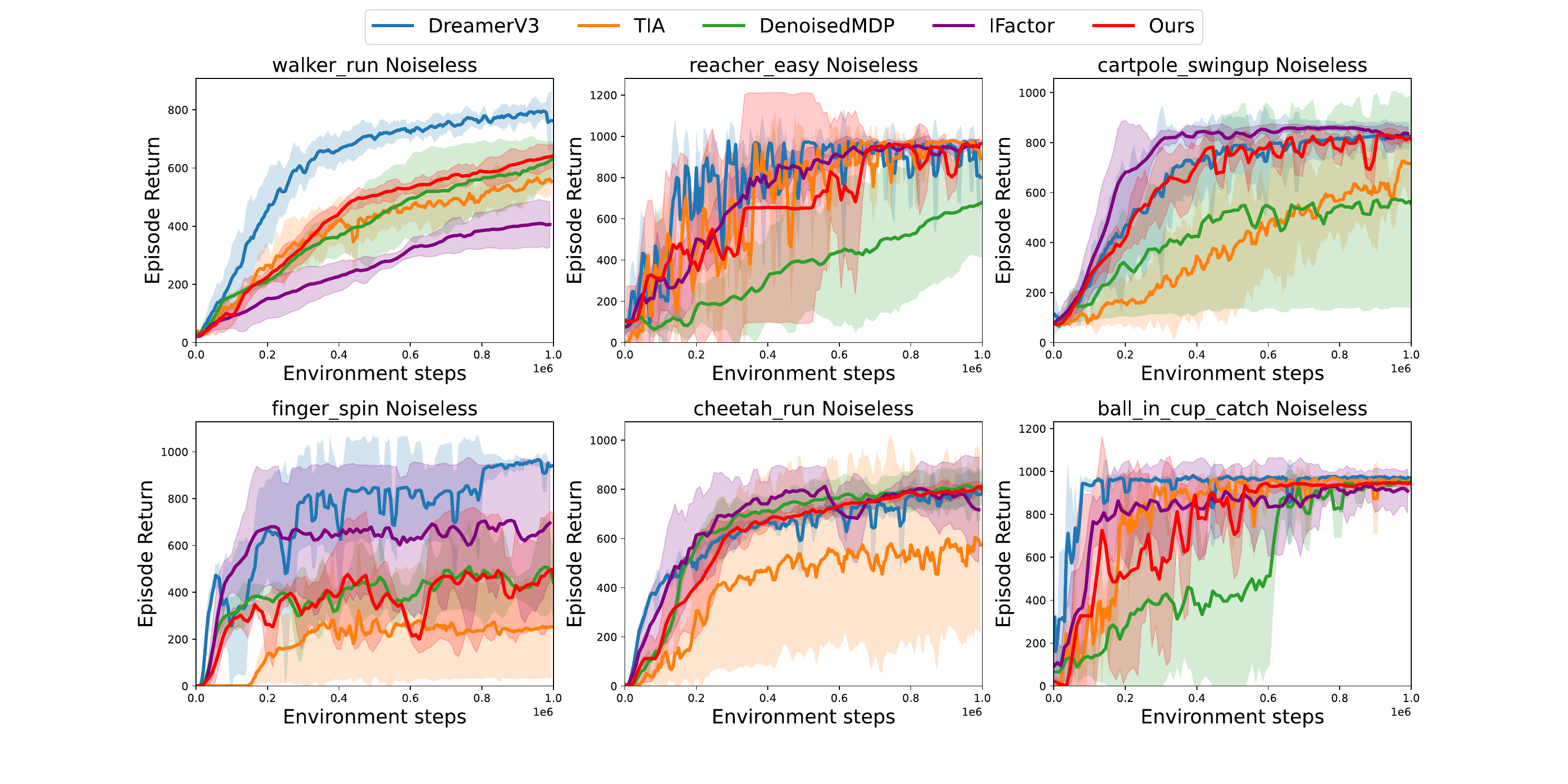}
    \caption{Our method is specifically designed to disentangle latent state from latent noise, which means it may not perform exceptionally in noiseless environments. However, evaluating performance in noiseless settings can offer valuable insights by allowing a comparative analysis of each method's effectiveness in both noisy and noiseless environments.}
    \label{fig:noiseless_chart}
\end{figure*}

\subsection{DMC with uniform background}
In a uniform background scenario, the video background is sampled only once during training, causing all training episodes to share the same noisy background. As a result, a potential shortcut for achieving disentanglement is to remove the invariant noise rather than preserving the true state. However, this approach is not feasible in real-world applications where such controlled conditions do not exist. All the previous work (TIA, DenoisedMDP and IFactor) involved training and testing their method on the Uniformed background DMCs, where the disentanglement could be easier. The performance of the compared methods is shown in Table \ref{ubg_table} and Figure \ref{fig:ubg_chart}. Since DreamerV3 is not designed for denoising, it is not included in the comparison on the Uniform DMC.

% \begin{table*}[h]
% \centering
% \resizebox{0.8\textwidth}{!}{%
% \begin{tabular}{c|cccc}
% \hline
% \textbf{Task} & \textbf{TIA} & \textbf{DenoisedMDP} & \textbf{IFactor} & \textbf{Ours} \\ 
% \hline
% \textbf{Cartpole Swingup}   &119.94 $\pm$  16.41  & 97.19 $\pm$ 18.58   & \bf 209.80$\pm$ 354.54 & 172.28$\pm$ 70.26 \\
% \textbf{Cheetah Run}       & 291.08 $\pm$ 74.14  &  317.31 $\pm$ 13.66  & \bf 514.51 $\pm$ 165.56 &469.47$\pm$ 117.29\\
% \textbf{Ball in cup Catch}  & 52.36 $\pm$ 49.66  &120.22 $\pm$ 25.17  & 5.63 $\pm$ 9.75 & \bf 201.81$\pm$156.10\\
% \textbf{Finger Spin}      & 354.44 $\pm$ 299.01 &\bf  559.95$\pm$47.03  & 504.3 $\pm$ 169.35 & 371.04$\pm$179.50 \\
% \textbf{Reacher Easy}    & 366.19 $\pm$ 129.51  &639.53 $\pm$ 118.87  & \bf 832.36 $\pm$ 79.60& 345.44 $\pm$ 471.07\\
% \textbf{Walker Run}        & 325.18 $\pm$ 42.01   & 401.35 $\pm$ 59.74   & 212.36 $\pm$ 166.29  & \bf 507.25 $\pm$ 139.84\\ \hline
% \end{tabular}%
% }
% \caption{[Prev]The performance of the compared methods in Uniform DMC environments with unseen video backgrounds is presented below. The best results are highlighted in \textbf{bold}. While our method does not always achieve top performance, it consistently demonstrates comparable results across nearly all environments.}\label{ubg_table}
% \end{table*}

\begin{table*}[h!]
\centering
\resizebox{0.8\textwidth}{!}{%
\begin{tabular}{c|cccc}
\hline
\textbf{Task} & \textbf{TIA} & \textbf{DenoisedMDP} & \textbf{IFactor} & \textbf{Ours} \\ 
\hline
\textbf{Cartpole Swingup}   &119.94 $\pm$  16.41  & 97.19 $\pm$ 18.58   & \best{209.80 $\pm$ 354.54} & \secondbest{178.86 $\pm$ 46.09} \\
\textbf{Cheetah Run}       & 291.08 $\pm$ 74.14  &  317.31 $\pm$ 13.66  & \best{514.51 $\pm$ 165.56} & \secondbest{469.47 $\pm$ 117.29}\\
\textbf{Ball in cup Catch}  & 52.36 $\pm$ 49.66  & \secondbest{120.22 $\pm$ 25.17}  & 5.63 $\pm$ 9.75 & \best{201.81 $\pm$ 156.10}\\
\textbf{Finger Spin}      & 354.44 $\pm$ 299.01 &  \secondbest{559.95 $\pm$ 47.03}  & 504.3 $\pm$ 169.35 & \best{603.84 $\pm$ 231.17} \\
\textbf{Reacher Easy}    & 366.19 $\pm$ 129.51  &639.53 $\pm$ 118.87  &  \secondbest{832.36 $\pm$ 79.60}& \best{971.75 $\pm$ 27.06}\\
\textbf{Walker Run}        & 325.18 $\pm$ 42.01   & \secondbest{401.35 $\pm$ 59.74}   & 212.36 $\pm$ 166.29  & \best{455.15 $\pm$ 137.68}\\ \hline
\end{tabular}%
}
\caption{The performance of the compared methods in Uniform DMC environments with unseen video backgrounds is presented below. The best results are highlighted in \best{Best Style} and the second best is highlighted in \secondbest{Second Best style}. Our method achieves the best performance in 4 out of 6 tasks and demonstrates comparable results in the other 2 tasks.}\label{ubg_table}
\end{table*}

\begin{figure*}[h]
    \centering
    \includegraphics[width=1\linewidth]{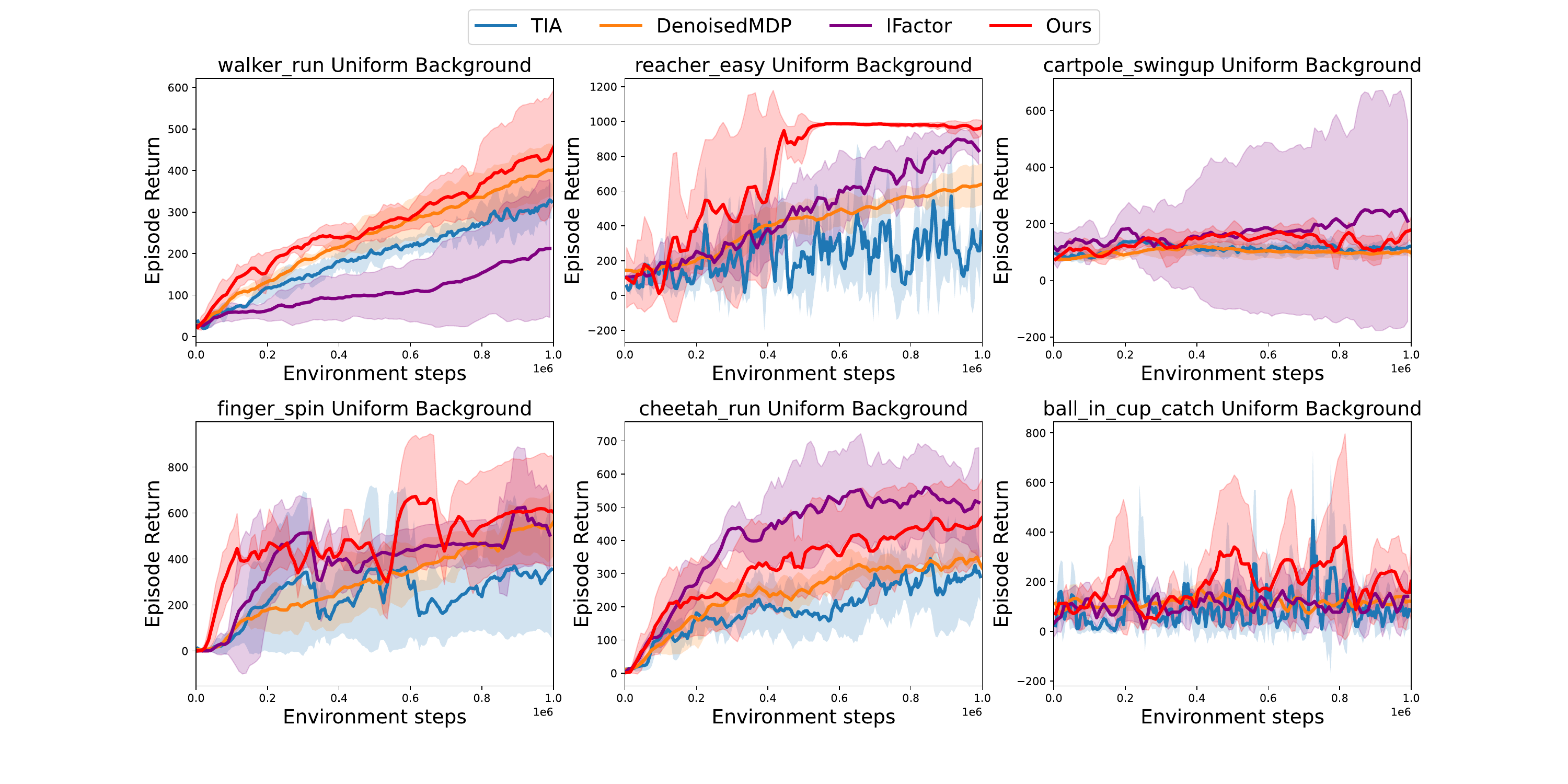}
    \caption{In the DMC with a uniform background, where the environment setting is less complex than in diverse scenarios, our method either achieves the best performance or consistently demonstrates comparable performance when measured against SOTA results.}
    \label{fig:ubg_chart}
\end{figure*}
\subsection{DMC with diverse background}
\paragraph{Disentanglement reflected on the observation space}
In assessing state belief reconstruction capabilities as a measure of disentanglement proficiency, our method, as detailed in Table \ref{samples}, demonstrates superior disentanglement across various tasks. The ability to distinguish the state belief within an environment is directly proportional to the robot's observable body proportion in the total environment, similar to a signal-to-noise ratio (SNR). High SNR tasks, such as Walker Run, Cheetah Run, and Finger Spin, offer simpler state belief identification. Conversely, low SNR tasks like Reacher Easy, Cartpole Swingup, and Ball in Cup Catch, present more significant challenges in state belief identification.
Comparative analysis indicates that alternative methods, such as TIA, DenoisedMDP, and IFactor, encounter difficulties in preserving state belief integrity and differentiating state belief from noise belief. This issue is particularly pronounced in high SNR tasks like `Walker Run` and `Cheetah Run`, where these methods tend to conflate state belief elements with the changing noise belief. The challenge is amplified in low SNR environments, diminishing the efficacy of these methods in accurately identifying the robot in tasks like `Ball in Cup Catch` and `Cartpole Swingup`.
In contrast, our method effectively separates state belief from noise belief in both high and low SNR tasks. For instance, in the `Ball in Cup Catch` task, it distinctly identifies both the cup and ball; in `Cartpole Swingup`, it captures the cart accurately; and in `Reacher Easy`, it identifies the red destination. This enhanced state belief reconstruction ability leads to improved overall performance, as outlined in Table \ref{dbg_table}. The only scenario where our method is outperformed is in the Reacher Easy task, where DreamerV3, with its larger architecture, showed superior results. Specifically, DreamerV3 utilizes a state dimension of 512, a posterior dimension of 32, and an observation embedding dimension of 4096. In contrast, our model employs a belief dimension of 120, a posterior dimension of 20, and an observation embedding dimension of 1024. We also conducted a simple ablation study on the Reacher Easy task using a belief dimension of 512 and a posterior dimension of 32, where we achieved a comparable result of $313.4 \pm 217.89$.
\begin{figure*}[h!]
    \centering
    \includegraphics[width=1\linewidth]{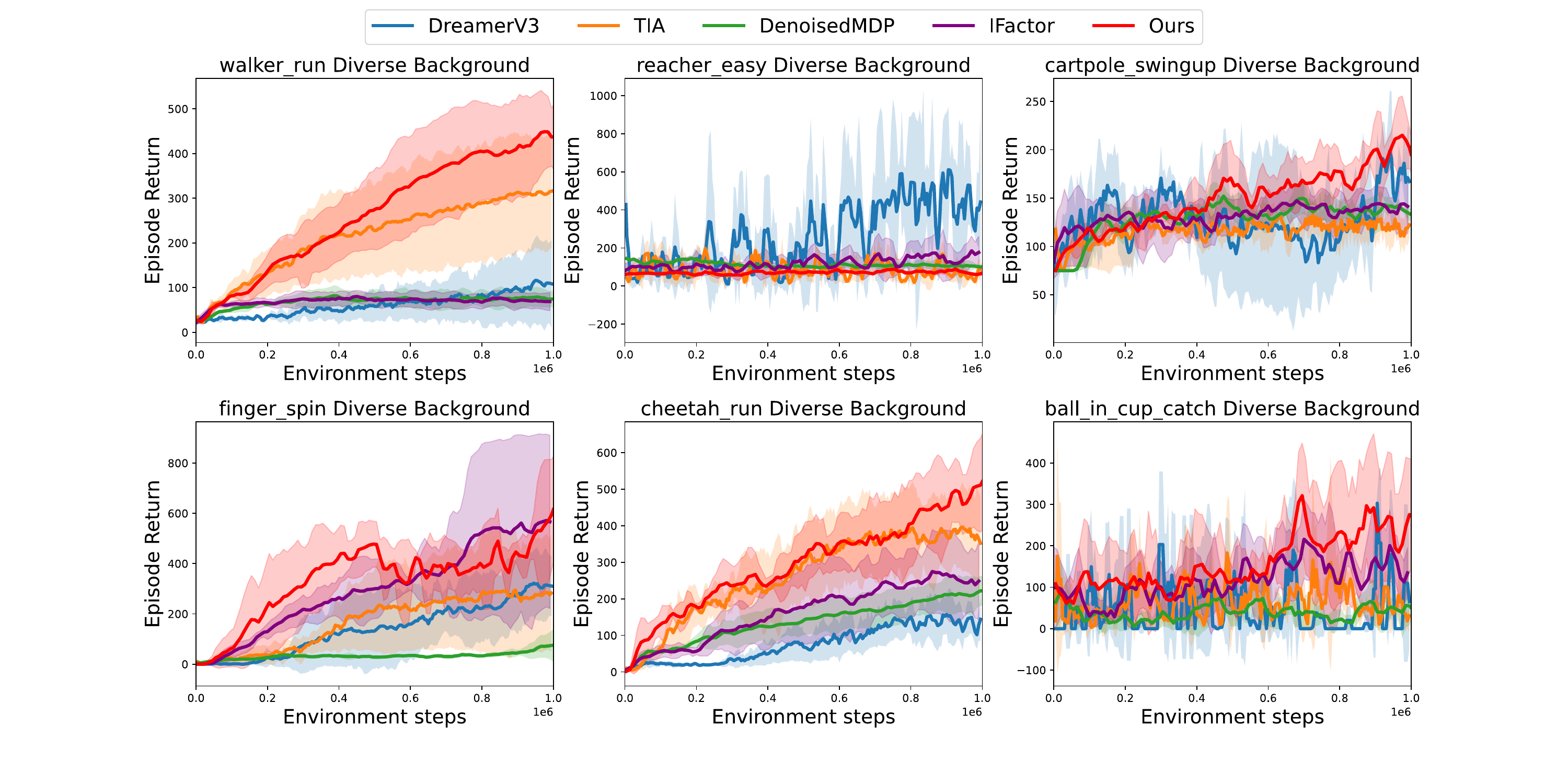}
    \caption{In the diverse video background scenario, our method outperforms strong baselines and achieves the best performance in 5 out of 6 tasks, except Reacher Easy. DreamerV3, benefiting from its larger architecture, achieves the best performance on Reacher Easy. Our method could also achieve comparable performance with an expanded belief space and posterior space. }
    \label{fig:dbg}
\end{figure*}
\section{Performance on Robodesk}
\label{robodesk}
In the Robodesk tasks, we can see that all the compared methods achieve quite close performance, around 500. However, because of the rich reward-related state belief (SNR is large), the error tolerance is much higher than DMC. Therefore, Robodesk might not be an appropriate environment for the disentanglement evaluation. 

\begin{figure*}[h]
    \centering
\includegraphics[width=0.7\linewidth]{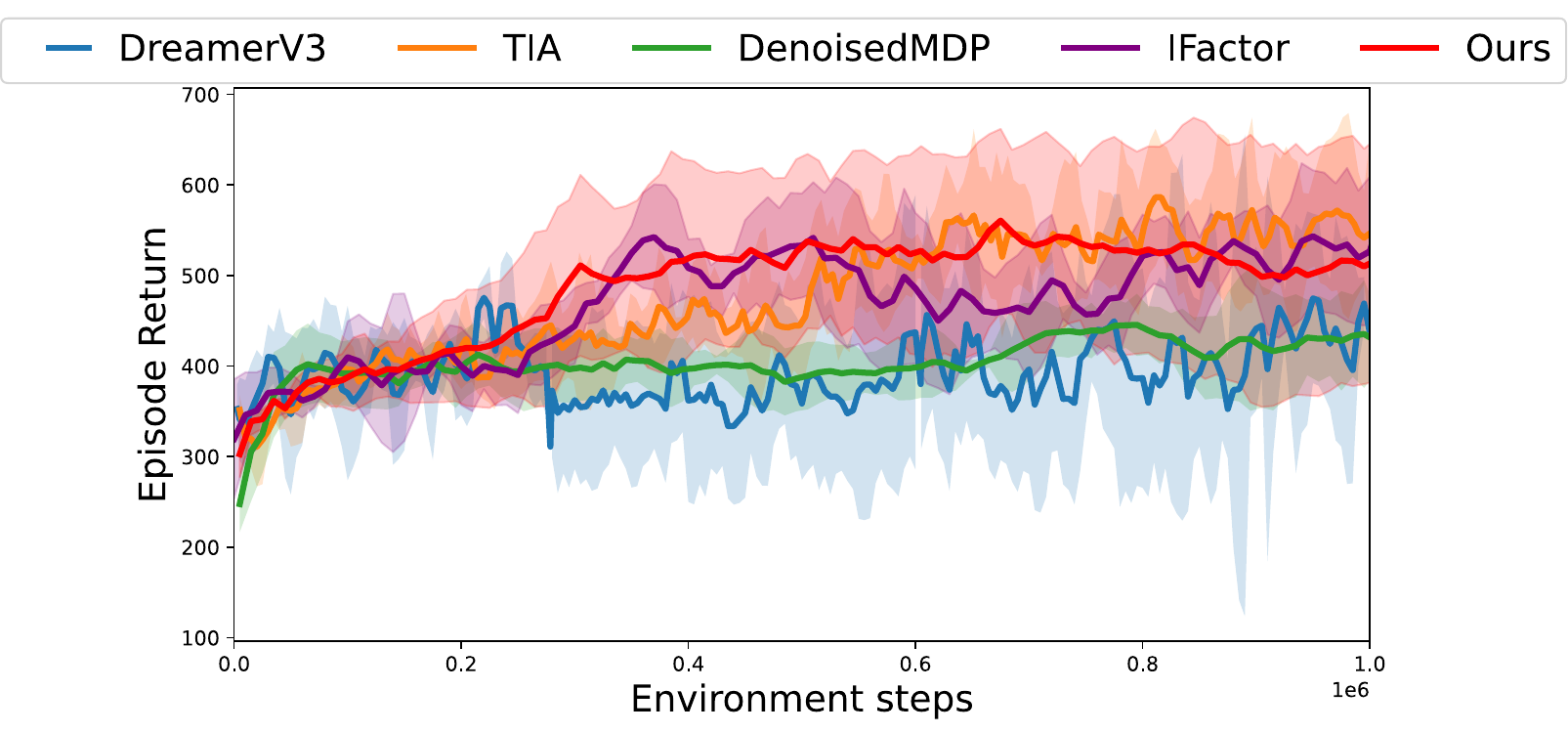}
    \caption{All compared methods demonstrate comparable performance on RoboDesk.}
    \label{fig:robodesk_chart}
\end{figure*}

\begin{table*}[h]
\centering
\resizebox{\textwidth}{!}{%
\begin{adjustbox}{valign=c}
\begin{tabular}{cccccc}
\hline
\textbf{ }&\textbf{DreamerV3} & \textbf{TIA} & \textbf{DenoisedMDP}& \textbf{IFactor}& \textbf{Ours} \\
\hline
\textbf{Raw Observation} & \includegraphics[width=0.15\linewidth]{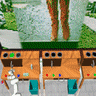} &\includegraphics[width=0.15\linewidth]{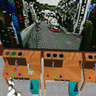}&\includegraphics[width=0.15\linewidth]{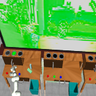}&\includegraphics[width=0.15\linewidth]{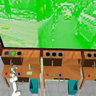} &\includegraphics[width=0.15\linewidth]{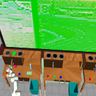}  \\
\hline
\textbf{State Reconstruction} & \includegraphics[width=0.15\linewidth]{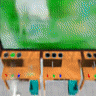}&\includegraphics[width=0.15\linewidth]{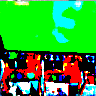}&\includegraphics[width=0.15\linewidth]{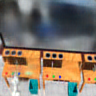}&\includegraphics[width=0.15\linewidth]{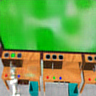}&\includegraphics[width=0.15\linewidth]{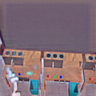}  \\ \hline
\textbf{Noise Reconstruction} & &\includegraphics[width=0.15\linewidth]{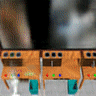}&\includegraphics[width=0.15\linewidth]{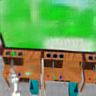}&\includegraphics[width=0.15\linewidth]{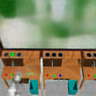} &\includegraphics[width=0.15\linewidth]{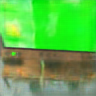} \\
\hline
\end{tabular}
\end{adjustbox}%
}
\caption{In the Robodesk task, all methods exhibit comparable performance, but their denoising capabilities vary. TIA fails to identify the robot arm; Even DenoisedMDP mistakes the greenness as noise; IFactor also mistakes the green light for noise. Although methods confuse the state and noise, they still get quite good performance maybe because the reward-related state observation is rich.}
\label{samples_robodesk}
\end{table*}

\section{Ablation study}
Table \ref{ablation_table} and Figure \ref{fig:ablation} present the findings from our ablation study. This study examines four variations of the world model: one with asymmetric reconstruction, and another with symmetric reconstruction. We can conclude that the asymmetric reconstruction further encourages the independence between the state belief and noise belief, The third version lacks the reward objective, in which the performance collapses without the guide of the reward preservation. The fourth operates without the KL divergence objective.

\begin{figure*}[h]
    \centering
    \includegraphics[width=1\linewidth]{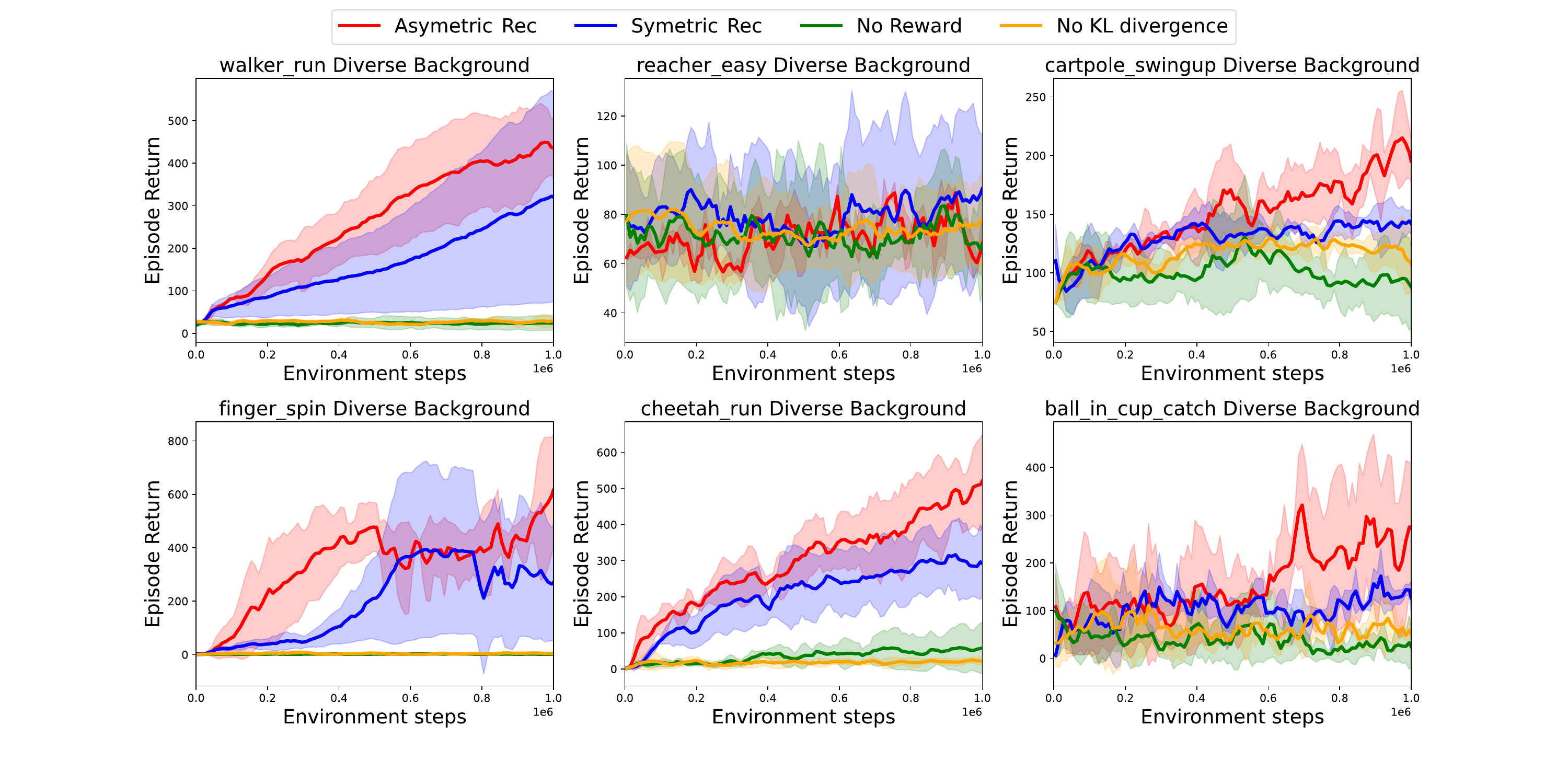}
    \caption{
In a scenario featuring diverse video backgrounds, the Asymmetric world model exhibits superior performance compared to its Symmetric counterpart in the majority of tasks. This superiority is further underscored by the high standard deviation observed in the Symmetric world model's performance, indicating the contribution of asymmetric reconstruction to the transition preservation. Furthermore, models that lack either a reward objective or a KL divergence objective demonstrate the violation of reward preservation and transition preservation, reaffirming the necessity of these components for optimal performance.}
\label{fig:ablation}
\end{figure*}
\section{Experiments Settings and Hyperparameters}
\label{sup:hyper}
All the methods were trained for 1 million environment steps and evaluated every 5,000 environment steps based on the average return from 10 episodes consisting of 1,000 steps each. The evaluation metrics' mean and standard deviation are derived from three independent runs. Each DMC task takes 8 hours on a GTX 3090 GPU, while the RoboDesk task takes 15 hours. We follow the same hyper-parameters ($\alpha, \beta, $ and the dimension of latent state beliefs) from DenoiedMDP without further tuning. This work is built on the top of Denoisedmdp. We don't further tune the hyperparameters. For both state belief and noise dimensions, the deterministic part is 120 and the stochastic part is 20. The $\alpha=1$ and $\beta=1$ are in the Noiseless DMC environments. In the DMC with uniform background, the $\alpha=1$ and $\beta=0.125$ for all tasks except for $\beta=0.25$ in the task `Ball in cup Catch`.  In DMC with diverse background, the $\alpha=1$ and $\beta=0.25$ except for $\beta=0.125$ in the task, `Cheetah run`. In the Robodesk tasks, the $\alpha=2$ and $\beta=0.125$. The code and instructions for the experiments' reproduction are also provided in the supplementary
\end{document}